\documentclass{article}

\usepackage{microtype}
\usepackage{graphicx}
\usepackage{subcaption}
\usepackage{booktabs} % for professional tables

\usepackage{multirow}
\usepackage{makecell}

\usepackage{hyperref}

\usepackage[accepted]{icml2026}

\usepackage{amsmath}
\usepackage{amssymb}
\usepackage{mathtools}
\usepackage{amsthm}

\usepackage{tikz}
\usepackage{pgfplots}
\pgfplotsset{compat=1.18}
\usetikzlibrary{arrows.meta,positioning,calc,fit,shapes.geometric}

\usepackage[capitalize,noabbrev]{cleveref}

\theoremstyle{plain}
\newtheorem{theorem}{Theorem}[section]
\newtheorem{proposition}[theorem]{Proposition}
\newtheorem{lemma}[theorem]{Lemma}

\theoremstyle{definition}
\newtheorem{definition}[theorem]{Definition}

\theoremstyle{remark}

\DeclareMathOperator{\Tr}{Tr}
\usepackage[dvipsnames]{xcolor}
\usepackage[many]{tcolorbox}
\tcbuselibrary{breakable}
\newtcolorbox{examplebox}[1]{%
  breakable,
  colback=white,
  colframe=LimeGreen!15,
  boxrule=2pt,
  arc=2pt,
  left=6pt,right=6pt,top=6pt,bottom=6pt,
  title={\textcolor{black}{#1}},
  fonttitle=\bfseries
}

\newtcolorbox{takeawaybox}[1]{%
  breakable,
  colback=white,
  colframe=black!10,
  boxrule=2pt,
  arc=1pt,
  left=0pt,right=6pt,top=6pt,bottom=6pt,
  title={\textcolor{black}{#1}},
  fonttitle=\bfseries
}

\tcolorboxenvironment{theorem}{
  colback=LimeGreen!10,
  colframe=white!40,   % faint border (set to white for none)
  boxrule=0pt,
  arc=1mm,            % rounded corners
  enhanced,
  before skip=6pt,
  after skip=6pt,
  left=2pt,right=2pt,top=2pt,bottom=2pt,
  breakable
}

\tcolorboxenvironment{definition}{
  colback=black!4,
  colframe=white!40,   % faint border (set to white for none)
  boxrule=0pt,
  arc=1mm,            % rounded corners
  enhanced,
  before skip=6pt,
  after skip=6pt,
  left=2pt,right=2pt,top=2pt,bottom=2pt,
  breakable
}

\hypersetup{
    colorlinks=true,
    linkcolor=blue!60!white,   % internal links (sections, theorems)
    citecolor=green!40!black,  % citations
    urlcolor=black!60!black     % external links
}

\usepackage[textsize=tiny]{todonotes}

\icmltitlerunning{Physics-Aligned Data Compression}

\begin{document}

\twocolumn[
  \icmltitle{A Geometric Lens on Physics-Aligned Data Compression}

  % It is OKAY to include author information, even for blind submissions: the
  % style file will automatically remove it for you unless you've provided
  % the [accepted] option to the icml2026 package.

  % List of affiliations: The first argument should be a (short) identifier you
  % will use later to specify author affiliations Academic affiliations
  % should list Department, University, City, Region, Country Industry
  % affiliations should list Company, City, Region, Country

  % You can specify symbols, otherwise they are numbered in order. Ideally, you
  % should not use this facility. Affiliations will be numbered in order of
  % appearance and this is the preferred way.
  \icmlsetsymbol{equal}{*}

  \begin{icmlauthorlist}
    \icmlauthor{Aleix Seguí}{oxf}
    \icmlauthor{Wesley Armour}{oxf}
  \end{icmlauthorlist}

  \icmlaffiliation{oxf}{Department of Engineering Science, University of Oxford, Oxford, UK}

  \icmlcorrespondingauthor{Aleix Segu{\'i}}{aleix.seguiugalde@eng.ox.ac.uk}
  \icmlcorrespondingauthor{Wesley Armour}{wes.armour@oerc.ox.ac.uk}

  % You may provide any keywords that you find helpful for describing your
  % paper; these are used to populate the "keywords" metadata in the PDF but
  % will not be shown in the document
  \icmlkeywords{Machine Learning, ICML}

  \vskip 0.3in
]

% this must go after the closing bracket ] following \twocolumn[ ...

% This command actually creates the footnote in the first column listing the
% affiliations and the copyright notice. The command takes one argument, which
% is text to display at the start of the footnote. The \icmlEqualContribution
% command is standard text for equal contribution. Remove it (just {}) if you
% do not need this facility.

% Use ONE of the following lines. DO NOT remove the command.
% If you have no special notice, KEEP empty braces:
\printAffiliationsAndNotice{}  % no special notice (required even if empty)
% Or, if applicable, use the standard equal contribution text:
% \printAffiliationsAndNotice{\icmlEqualContribution}

\begin{abstract}
In AI for Science, physics-informed losses are increasingly used to train learned compressors for scientific data, but their rate--distortion implications remain poorly understood. At fixed bitrate, these objectives often improve preservation of a target physical observable while degrading standard reconstruction fidelity. We develop a local geometric theory showing that this tradeoff is governed by the interaction of latent-space sensitivities induced by the entropy model, the physical observable, and the distortion metric. At each operating point, these induce preferred directions along which compression noise should be suppressed, yielding an anisotropic error-allocation mechanism. When these directions are misaligned, improving the observable at fixed rate necessarily worsens standard distortion, establishing a fundamental limit on simultaneous preservation. We formalise this through a local tangent-space rate--distortion law and introduce a practical alignment diagnostic based on dominant eigenspace overlap. Experiments across scientific domains test the theory and validate that the alignment diagnostic correlates with observed data- and physics-space trade-offs.
\end{abstract}

\section{Introduction}

Scientific instruments and large-scale simulations in climate, fluid dynamics, geophysics, and astrophysics now produce data volumes that make storage and transfer a primary bottleneck. Yet scientific use rarely depends on pointwise fidelity alone: conclusions are drawn from derived \emph{physical observables} (spectra, gradients, fluxes, conserved quantities, or event indicators) whose sensitivity to perturbations is structured and often highly anisotropic.

Learned compression offers strong rate--distortion performance through latent representations and learned entropy models \citep{balle2018hyperprior,minnen2018joint}. This flexibility has motivated physics-aware objectives that aim to preserve downstream observables rather than mean-square error (MSE) alone. In scientific compression, this trend appears both in classical quantities-of-interest-aware pipelines and in more recent learned approaches \citep{ainsworth2019multilevel, jiao2022qoi,lee2022qoi,liu2021exploringautoencoderbased,galletti2026pinc}. Empirically, such objectives reveal a recurring phenomenon: at fixed bitrate, one can improve preservation of the target observable, but typically at the cost of worse standard reconstruction fidelity. However, the phenomenon remains poorly understood: 

\begin{center}
    \emph{When can a codec preserve physically meaningful quantities under a bitrate constraint, and what governs the resulting trade-offs with standard fidelity measures?}
\end{center}

We address these questions through a local geometric theory of physics-aware compression. Our main result is that, at each latent operating point, preserving the physical observable and preserving standard fidelity each prioritise certain eigendirections along which compression noise should be suppressed. The key feature is whether these directions are aligned. When they are not, improving physical observables at fixed rate necessarily worsens standard distortion, providing a geometric explanation for the fixed-rate physics--MSE tradeoff often seen in practice.

Formally, we show that, under local perturbations in latent space, physics-aware compression behaves as a sample-conditioned tangent-space rate--distortion law. In this regime, error allocation is governed by three latent-space metrics: a \emph{rate} metric induced by the entropy model, a \emph{physics sensitivity} metric induced by the observable of interest, and a \emph{signal fidelity} metric induced by signal-space distortion. This yields an explicit inverse-stiffness allocation rule for compression noise. Our theory further motivates a practical alignment measure based on dominant eigenspace overlap.

We validate these predictions across several regimes, including computational fluid dynamics, cosmological simulations, and electron microscopy measurements of cerebral cortex volume.

\section{Related Work}

\paragraph{Foundations and learned compression.}
Our work builds on the classical rate--distortion tradition \citep{shannon1948mathematical,shannon1959coding}, high-rate quantisation theory \citep{gersho1979quant}, and modern learned compression with autoencoders and learned entropy models \citep{balle2018hyperprior,minnen2018joint}. Recent advances include expressive entropy and context models, including transformer-based priors \citep{qian2022entroformer}, joint global--local hierarchies \citep{kim2022jointgloballocalpriors}, and data-driven dictionary priors \citep{lu2025dictionaryentropy}. 
Related progress on rate--distortion of discrete representations also appears on task-based vector quantisation \citep{oord2018neuraldiscreterepresentationlearning,  shlezinger2019quantization,gao2024rabitq,zandieh2025turboquant}. Our contribution differs in focusing on the local geometry induced by the entropy model and its interaction with physics-aware distortion criteria.

\paragraph{Scientific compression.}
Scientific data compression has a history in high-performance computing, with widely used compressors prioritising throughput and explicit error guarantees on floating-point data \citep{lindstrom14zfp,di2016sz,ballester2020tthresh, di25surveyonerrorboundedlossycompression}. More recently, both classical and learned approaches have increasingly emphasised preservation of downstream quantities of interest (QoI) rather than only primary-data fidelity. Some works derive explicit theory linking primary-data error to QoI error \citep{jiao2022qoi}, while others combine learned and classical components to better preserve derived quantities under compression constraints \citep{ainsworth2019multilevel, lee2022qoi,banerjee2022scalablehybridlearningtechniques,liu2021exploringautoencoderbased,liu2023scientificsuperresolution,jia2025neurlzonlinelearningmethod}. Domain-specific learned compressors further incorporate physics-informed objectives for particular PDE systems, including turbulent flows and plasma simulations \citep{momenifar2022pinnvq,choi2021neural,galletti2026pinc}. Our work is complementary: rather than proposing another physics-aware objective, we explain geometrically when such objectives induce a tradeoff with standard fidelity and when they can be rate-efficient.

\paragraph{Indirect, semantic, and task-defined distortion.}
Our formulation is adjacent to information-theoretic treatments where distortion is imposed on an indirectly observed or task-relevant quantity. Early examples include noisy source coding formulations \citep{dobrushin1962, wolfziv1970} and indirect rate--distortion \citep{witsenhausen1980indirect}. Task-aware compression is also closely connected to the information bottleneck \citep{tishby2000information,alemi2019deepvariationalinformationbottleneck}, semantic rate--distortion \citep{liu2021ratedistortionsemantic}, task-based codecs for computer vision \citep{duan2020videocodingformachines, fernandez2026imagecodingformachines}, and perception \citep{blau2019rethinking}. Our setting is related in spirit, but differs in developing a sample-local tangent-space characterisation, rather than a global operational rate--distortion function.

\section{Preliminaries}

\subsection{Learned compression setting}

We consider a standard learned compression architecture consisting of an encoder--decoder pair and a learned entropy model. Given data $x \in \mathcal{X} \subset \mathbb{R}^d$, the encoder $f_\phi$ maps the input to a latent representation
$z = f_\phi(x) \in \mathcal{Z} \subset \mathbb{R}^m,$ which is quantised to $\hat z$ and decoded as
$\hat x = g_\theta(\hat z).$
The latent representation $\hat z$ is modelled via a learned probability density $p_\psi(\hat z)$.

\begin{figure*}[!t]
     \centering
    \includegraphics[width=0.8\linewidth]{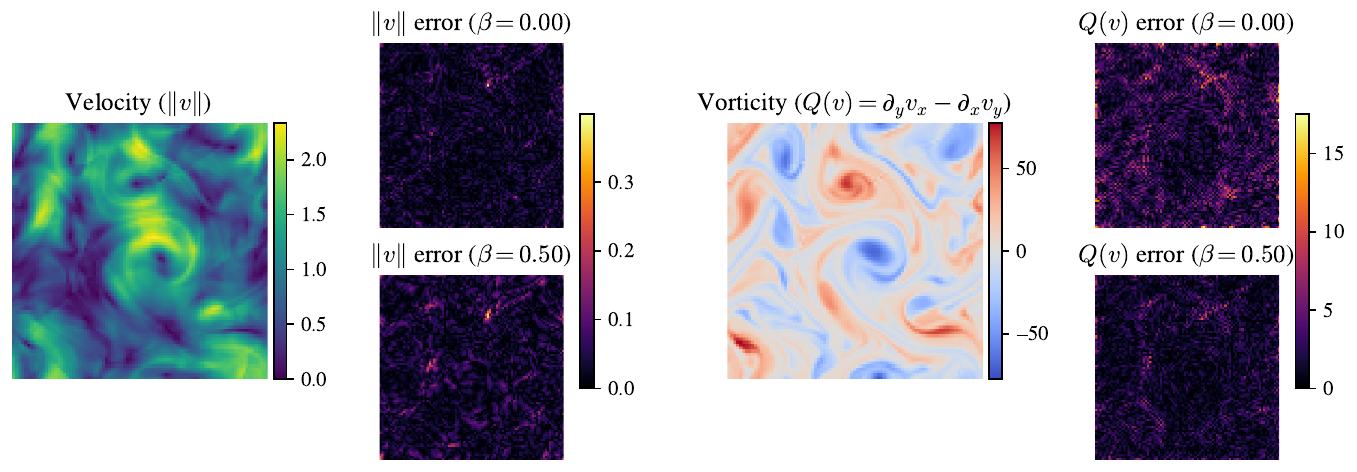}
%    \caption{As a running example, we use 2D velocity fields from PDEBench \cite{takamoto2022pdebench} with channels $(v_x, v_y)$. We consider the vorticity as the physical observable, $Q(\mathbf{v}) = \partial_y v_x - \partial_x v_y$. The figure compares compression error for the velocity field and the derived vorticity. Two models are trained at a constant bitrate of 0.85 bps: $\beta=0$ (MSE-only) and $\beta=0.5$ (physics-informed). Here, $\beta$ denotes the \emph{physics weight} (defined in Section~\ref{sec:experiments}). At equal rate, the physics-aligned compressor better preserves $Q$ with only a mild trade-off in $\mathbf{v}$.}

    \caption{As a running example, we use 2D velocity fields from PDEBench~\cite{takamoto2022pdebench} with channels $(v_x,v_y)$. The physical observable is vorticity,
$Q(\mathbf v)=\partial_y v_x-\partial_x v_y$. The figure compares pointwise compression errors for the reconstructed velocity field and for the derived vorticity. Two models are trained at the same bitrate, $0.85$ bps: $\beta=0$ corresponds to MSE-only training, while $\beta=0.5$ includes the physics loss, where $\beta$ is the physics weight defined in Section~\ref{sec:experiments}. At the same bitrate, the $\beta=0.5$ model changes the relative allocation of error between velocity-space and vorticity-space distortions.}
    \label{fig:vorticity_example}
\end{figure*}

\subsection{Physical observables}
\label{sec:phys_observables}

We assume that data $x\in\mathcal{X}$ arise as observable realisations of an underlying physical system, and that the scientifically meaningful content of the data can be summarised through suitably chosen \emph{observables}. These observables express prior knowledge about which aspects of the signal should be preserved by compression.

\begin{definition}[Physical Observable]
\label{def:phys_observable}
Let $Q:\mathcal{X}\to\mathcal{Q}$ be a differentiable map from data space to a space of physical quantities of interest (QoIs), such as spectra, gradients, fluxes, conservation laws or any other physical quantities. We assess fidelity with respect to physical semantics through a smooth distortion loss
\[
D_Q:\mathcal{Q}\times\mathcal{Q}\to\mathbb{R}_+,
\qquad
(x,\hat x)\mapsto D_Q\!\left(Q(x),Q(\hat x)\right),
\]
which measures deviation in observable space.
\end{definition}

The choice of $Q$ specifies which features of $x$ are deemed physically relevant, while $D_Q$ determines how discrepancies in those features are penalised. We introduce a running example of physical observable on Fig. \ref{fig:vorticity_example}.

\subsection{Problem formulation}
\label{sec:problem}

Physics-aligned compression can be viewed as an \emph{indirect} rate--distortion problem: rather than constraining reconstruction error only in signal space, we also constrain distortion in the space of physically meaningful observables. The aim is to minimise the required coding rate while ensuring (i) accurate preservation of the observables $Q(X)$ and (ii) acceptable overall reconstruction fidelity.

\begin{definition}[Physics-Aware Rate--Distortion]
\label{def:phys_rd}
Let $X$ be a random variable on $\mathcal{X}$, and let $Q$ and $D_Q$ be as in Definition~\ref{def:phys_observable}. Let $D_X:\mathcal{X}\times\mathcal{X}\to\mathbb{R}_+$ be a data-space distortion (e.g.\ mean-square error). For thresholds $(\varepsilon_Q,\varepsilon_X)$, define the physics-aligned rate-distortion function

\begin{align}
R(\varepsilon_Q,\varepsilon_X)
:=\;& \inf_{p(\hat X\mid X)} \ I(X;\hat X)
\label{eq:phys_rd_obj}\\
\text{s.t.}\quad
& \mathbb{E}\!\left[D_Q\!\left(Q(X),Q(\hat X)\right)\right]\le \varepsilon_Q,
\label{eq:phys_rd_phys}\\
& \mathbb{E}\!\left[D_X\!\left(X,\hat X\right)\right]\le \varepsilon_X.
\label{eq:phys_rd_fid}
\end{align}
\end{definition}

The observable constraint \eqref{eq:phys_rd_phys} ensures that compression preserves the prescribed physical quantities of interest, while the signal-space constraint \eqref{eq:phys_rd_fid} prevents degenerate solutions that may satisfy \eqref{eq:phys_rd_phys} yet exhibit severe artifacts or implausible deviations along directions weakly constrained by the chosen observable. This formulation is closely related to indirect and semantic rate--distortion viewpoints \citep{dobrushin1962, liu2021ratedistortionsemantic}, where the semantics are accessed through $Q(X)$.

\section{A Local Geometric Model of Physics-Aligned Compression}
\label{sec:geom_model}

We now develop a local geometric model that characterises how rate constraints and physics-aware distortion jointly shape admissible reconstruction error. The key idea is to analyse compression through small perturbations in latent space, under which both rate and distortion admit tractable second-order structure. 

%%%%%%%%%

\subsection{Local perturbation model in latent space}
\label{sec:local_perturb}

We adopt a local viewpoint in which the effect of quantisation and entropy coding is modelled as a small random perturbation of a reference latent representation. Let $Z=f_\phi(X)$ denote the latent induced by the encoder, and let $\hat Z$ denote the quantised latent. Locally around a typical $z=f_\phi(x)$, we model
\begin{equation}
\hat Z \;=\; Z + \eta,
\quad
\mathbb{E}[\eta \mid X]=0,
\quad
\mathbb{E}[\eta\eta^\top \mid X]=\Sigma,
\label{eq:local_noise_model}
\end{equation}
where $\eta$ captures compression-induced uncertainty (e.g.\ quantisation noise) and $\Sigma \succ 0$ is its covariance. Throughout, we interpret $\Sigma$ as the object through which the codec allocates reconstruction error across latent directions.

When needed, we will use the Gaussian local channel approximation
$q_\Sigma(\hat z \mid x) \;\approx\; \mathcal{N}\!\big(z(x),\Sigma\big),$
which is both analytically convenient and (among all noise models with covariance $\Sigma$) maximises conditional entropy, yielding a conservative ``volume'' proxy \cite{GershoGray1992}.

\subsection{Entropy curvature and a local rate surrogate}
\label{sec:entropy_curvature}

To solve the physics-aligned rate-distortion problem defined in Eq.~\eqref{eq:phys_rd_obj}, we must quantify the cost of storing the quantised latent $\hat Z$. However, directly minimising the mutual information $I(X; \hat X)$ is usually intractable.

Instead, learned codecs use an entropy model \(p_\psi(\hat z)\) and optimise a variational latent-rate proxy. If \(q(\hat z\mid x)\) denotes the conditional distribution over quantised latents induced by the encoder and noise model, we write
\begin{equation}
R_{\mathrm{var}}
\;:=\;
\mathbb{E}_{p(x)}\!\left[
\mathrm{KL}\!\left(q(\hat Z\mid X)\,\|\,p_\psi(\hat Z)\right)
\right].
\label{eq:Rvar_def_simpl}
\end{equation}

This is a standard variational upper bound used in learned compression \citep{balle2018hyperprior}, which bounds the rate by the expected KL divergence between the posterior and the prior. Indeed, $I(X;\hat X)
\;\le\;
I(X;\hat Z)
\;\le\;
R_{\mathrm{var}}$ as shown in Appendix~\ref{app:rate_bounds_proof}.

We now examine this rate locally around a fixed sample \(x\), with latent \(z=f_\phi(x)\), under the perturbation model \eqref{eq:local_noise_model}.

\begin{proposition}[Local expansion of the variational rate]
\label{lem:local_var_rate}
Assume the local Gaussian channel approximation
$q_\Sigma(\hat z\mid x)\approx \mathcal{N}(z,\Sigma)$ with $\Sigma\succ 0$,
and assume that the negative log-likelihood admits a second-order expansion around $z$ of the form
\begin{equation}
-\log p_\psi(z+\eta)
\;=\;
\mathrm{const.}
+\tfrac12\,\eta^\top H_R(z)\,\eta
+o(\|\eta\|^2),
\label{eq:local_prior_quad}
\end{equation}
for some symmetric matrix $H_R(z)\succeq 0$.
Then the $\Sigma$-dependent part of $R_{\mathrm{var}}$ satisfies
\begin{equation}
R_{\mathrm{var}}
\;=\;
\mathrm{const.}
+\frac12\,\mathrm{Tr}\ \big(H_R(z)\,\Sigma\big)
-\frac12\,\log\det\Sigma
+o(\|\Sigma\|).
\label{eq:Rvar_local_expansion}
\end{equation}
\end{proposition}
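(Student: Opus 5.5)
The argument splits the per-sample KL divergence into a negative entropy and a cross-entropy term, and then handles each separately. For fixed $x$ with $z=f_\phi(x)$, write
\[
\mathrm{KL}\big(q_\Sigma(\cdot\mid x)\,\|\,p_\psi\big)
= -h\big(q_\Sigma(\cdot\mid x)\big) + \mathbb{E}_{\eta\sim\mathcal N(0,\Sigma)}\big[-\log p_\psi(z+\eta)\big].
\]
The local expansion is then carried out sample-wise. The outer expectation over $p(x)$ in \eqref{eq:Rvar_def_simpl} is read as the sample-conditioned contribution at the operating point $z$, which is how the statement treats it, with $H_R(z)$ and $\Sigma$ local to $x$.

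\textbf{Entropy term.} For the Gaussian channel the differential entropy is exact:
\[
h\big(\mathcal N(z,\Sigma)\big)=\tfrac12\log\det\Sigma+\tfrac m2\log(2\pi e).
\]
This term contributes $-\tfrac12\log\det\Sigma$ to the KL divergence, plus a $\Sigma$-independent constant. No approximation enters here.

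\textbf{Cross-entropy term.} Substitute the quadratic expansion \eqref{eq:local_prior_quad} under the expectation. The constant passes through unchanged. The quadratic part gives
\[
\mathbb{E}\big[\tfrac12\eta^\top H_R(z)\eta\big]=\tfrac12\mathrm{Tr}\big(H_R(z)\,\mathbb{E}[\eta\eta^\top]\big)=\tfrac12\mathrm{Tr}\big(H_R(z)\Sigma\big),
\]
using linearity of trace together with the zero-mean and covariance assumptions of \eqref{eq:local_noise_model}. The expansion as stated has no linear term. If one were present, with gradient $g$, it would contribute $\mathbb{E}[g^\top\eta]=0$ anyway, so the result is robust to this.

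\textbf{Remainder.} The main obstacle is showing that $\mathbb{E}[o(\|\eta\|^2)]=o(\|\Sigma\|)$. Scaling gives $\mathbb{E}\|\eta\|^2=\mathrm{Tr}\,\Sigma\le m\|\Sigma\|$, so the remainder has the right order formally. A rigorous proof, however, needs the expectation of the remainder to be controlled on the Gaussian tails, where $\eta$ is not small. I would first write the remainder as $r(\eta)=\|\eta\|^2\epsilon(\eta)$ with $\epsilon(\eta)\to0$ as $\eta\to0$. I would then add a mild regularity assumption: $-\log p_\psi$ is $C^2$ near $z$ with at most polynomial growth globally, which holds for the standard factorised and hyperprior Gaussian/logistic entropy models. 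Next, split the expectation into $\{\|\eta\|\le\delta\}$ and its complement. On the first region the contribution is bounded by $\sup_{\|\eta\|\le\delta}|\epsilon|\cdot\mathrm{Tr}\,\Sigma$. On the complement, Gaussian concentration gives a contribution of order $e^{-c\delta^2/\|\Sigma\|}$, which is $o(\|\Sigma\|)$. Choosing $\delta\to0$ slowly as $\|\Sigma\|\to0$ then gives $o(\|\Sigma\|)$ overall.

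Summing the two terms and absorbing all $\Sigma$-independent pieces into the constant yields \eqref{eq:Rvar_local_expansion}. Since $\Sigma\succ0$, the $\log\det$ term is well defined throughout.
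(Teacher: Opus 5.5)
Your proposal is correct and follows the same route as the paper's proof. Both split the per-sample KL into the exact Gaussian negative entropy, which gives $-\tfrac12\log\det\Sigma$, and the expected cross-entropy. In the cross-entropy, any linear term vanishes by zero mean and the quadratic term gives $\tfrac12\mathrm{Tr}(H_R(z)\Sigma)$. Your polynomial-growth assumption and ball/tail splitting go further than the paper. They justify the claim $\mathbb{E}[o(\|\eta\|^2)]=o(\|\Sigma\|)$, which the paper asserts without argument and which the stated pointwise expansion alone does not guarantee.
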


%\begin{proof}
%See Appendix~\ref{app:local_var_rate_proof}.
%\end{proof}

The two terms in \eqref{eq:Rvar_local_expansion} have different origins. The negative entropy of the Gaussian local channel contributes \(-\frac12\log\det\Sigma\), penalising shrinking the total volume. The expected negative log-likelihood contributes \(\frac12\operatorname{Tr}(H_R(z)\Sigma)\), penalising placing uncertainty in latent directions with high entropy-model curvature. The full derivation is given in Appendix~\ref{app:local_var_rate_proof}.

We therefore use the following local rate surrogate:
\begin{equation}
R_{\mathrm{sur}}(\Sigma;z)
:=
\frac12\,
\underbrace{\mathrm{Tr}\big(H_R(z)\Sigma\big)}_{\text{variance placement}}
-
\frac12\,
\underbrace{\log\det\Sigma}_{\text{uncertainty volume}},
\label{eq:Rsur_def_simpl}
\end{equation}
dropping additive constants independent of \(\Sigma\). This surrogate makes the geometry of rate explicit: at fixed uncertainty volume, noise aligned with high-curvature directions of the entropy model costs more bits than noise aligned with low-curvature directions, as illustrated in Fig.~\ref{fig:rate_geometry}.

In practice, \(H_R\) should be understood as an effective positive-semidefinite rate metric rather than necessarily the raw Hessian of an arbitrary marginal density. For conditional entropy models such as hyperpriors or context models \citep{minnen2018joint}, standard conditional likelihoods often give a PSD curvature in \(z\) for fixed side information. When the exact Hessian is indefinite or inconvenient, one can instead use a PSD surrogate, such as a Fisher-information approximation. The subsequent theory only requires such a PSD local metric satisfying the expansion in \eqref{eq:Rvar_local_expansion}.

%\begin{remark}[PSD rate metrics in practice]
%\label{rem:psd_rate_metric}

\begin{figure}[t]
    \centering
    \includegraphics[width=0.8\linewidth]{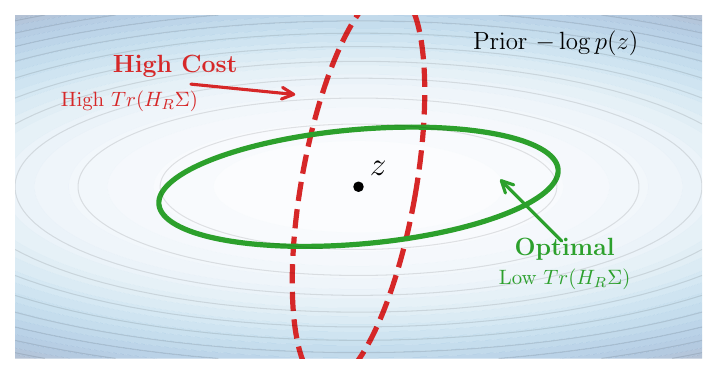}
    \caption{\textbf{The Geometry of Rate.} The contours represent the negative log-prior $-\log p(z)$. The curvature $H_R$ is high (steep) along the vertical axis and low (flat) along the horizontal. Both ellipses represent a noise covariance $\Sigma$ with the same quantisation volume (same entropy/$\log\det \Sigma$). The \textcolor{red}{red dashed ellipse} pays a high bit-cost because it has high variance along the steep direction. The \textcolor{OliveGreen}{green solid ellipse} is optimal: it aligns its variance with the flat directions of the prior, minimising the trace cost $\mathrm{Tr}(H_R \Sigma)$.}
    \label{fig:rate_geometry}
\end{figure}

\subsection{Signal fidelity and physical sensitivity}
\label{sec:phys_sensitivity}

We now quantify how latent perturbations translate into errors both in the reconstructed field and the physical quantities of interest. Under a local tangent-space expansion, both signal-space and physics-aware distortions become quadratic forms in the latent perturbation $\eta$. Their expected values are therefore controlled by the covariance $\Sigma$ introduced in \eqref{eq:local_noise_model}.

%The key point is that, under a local tangent-space expansion, both signal-space and physics-aware distortions become quadratic forms in the latent perturbation $\eta$. Their expected values are therefore controlled by the covariance $\Sigma$ introduced in \eqref{eq:local_noise_model}.

%\paragraph{Local linearisation.}
%Fix an operating point $x$ with latent code $z=f_\phi(x)$, and consider a small perturbation $\hat z=z+\eta$. A first-order expansion of the decoder gives

We fix an operating point \(x\) with latent code \(z=f_\phi(x)\), and consider a perturbation \(\hat z=z+\eta\). Linearising the decoder around \(z\) gives
\begin{equation}
\hat x=g_\theta(\hat z)
\;\approx\;
g_\theta(z)+J_g(z)\,\eta,
\label{eq:lin_decoder}
\end{equation}
where $J_g(z):=\nabla_z g_\theta(z)$ is the decoder Jacobian. If deterministic reconstruction bias at the operating point is neglected, or treated separately, we may write $\delta x:=\hat x-x\approx J_g(z)\eta$. Passing this perturbation through the observable map yields
\begin{equation}
Q(\hat x)
\;\approx\;
Q(x)+J_Q(x)\,\delta x
\;\approx\;
Q(x)+J_Q(x)\,J_g(z)\,\eta,
\label{eq:lin_observable}
\end{equation}
 where \(J_Q(x):=\nabla_x Q(x)\). Thus, to first order, latent perturbations affect the observable through the composed linear operator \(J_Q(x)J_g(z)\), as illustrated in Fig.~\ref{fig:latent_q_transf}.

\begin{figure}[t]
    \centering
    \includegraphics[width=\linewidth]{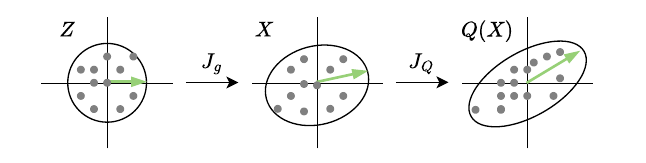}
    \caption{Illustration of the error mapping from latent space to physical observable space through $J_Q(x)\,J_g(z)\,\eta$.}
    \label{fig:latent_q_transf}
\end{figure}

To relate these perturbations to distortion, we approximate the losses by local quadratic forms. Let \(H_D(x)\succeq 0\) denote the Hessian of the data-space loss with respect to perturbations in \(x\), and let \(H_\ell(x)\succeq 0\) denote the Hessian of the observable-space loss with respect to perturbations in \(Q(x)\).\footnote{For example, if \(D_X(x,\hat x)=\|x-\hat x\|_2^2\), then \(H_D(x)=2I\).}
Pulling these quadratic forms back to latent space gives the following two metrics.
\begin{definition}[Fidelity and Sensitivity Metrics]
\label{def:metrics}
The \emph{Fidelity Metric} $G_{\mathrm{eff}}$ measures the curvature of the reconstruction error in latent space:
\begin{equation}
    G_{\mathrm{eff}}(x) \;:=\; J_g(z)^\top H_D(x) J_g(z).
\end{equation}
The \emph{Physics Sensitivity Metric} $W_{\mathrm{eff}}$ measures the curvature of the physical observable loss in latent space:
\begin{equation}
    W_{\mathrm{eff}}(x) \;:=\; J_g(z)^\top \underbrace{J_Q(x)^\top H_\ell(x) J_Q(x)}_{\text{Observable Stiffness}} J_g(z).
\end{equation}
\end{definition}

The metrics \(G_{\mathrm{eff}}\) and \(W_{\mathrm{eff}}\) determine how the covariance of latent perturbations contributes to expected distortion. Specifically, for a zero-mean perturbation satisfying \(\mathbb{E}[\eta\eta^\top]=\Sigma\), the expected quadratic loss is given by a trace pairing between the relevant metric and \(\Sigma\).

\begin{proposition}[Second-order distortion surrogates]
\label{prop:second_order_distortions}
Under the linearisations \eqref{eq:lin_decoder}--\eqref{eq:lin_observable} and the local quadratic approximations, the $\Sigma$-dependent parts of the expected distortions satisfy
\begin{align*}
\mathbb{E}\!\left[D_X\!\left(X,\hat X\right)\mid X=x\right]
&\;\approx\;
\mathrm{Tr}\!\left(G_{\mathrm{eff}}(x)\,\Sigma\right),
\\
\mathbb{E}\!\left[D_Q\!\left(Q(X),Q(\hat X)\right)\mid X=x\right]
&\;\approx\;
\mathrm{Tr}\!\left(W_{\mathrm{eff}}(x)\,\Sigma\right).
\end{align*}
\end{proposition}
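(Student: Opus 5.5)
The argument is a second-order Taylor expansion of each distortion about the operating point, composed with the linearisations \eqref{eq:lin_decoder}--\eqref{eq:lin_observable}. After that, the trace identity $\mathbb{E}[\eta^\top A\eta]=\mathrm{Tr}(A\,\mathbb{E}[\eta\eta^\top])=\mathrm{Tr}(A\Sigma)$ does the rest. I treat the data-space distortion first; the observable case follows the same template with one extra composition.

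\textbf{Data-space distortion.} For fixed $x$, set $\phi(\hat x):=D_X(x,\hat x)$ and expand around $\hat x=x$:
\[
\phi(x+\delta x)=\phi(x)+\nabla\phi(x)^\top\delta x+\tfrac12\,\delta x^\top \nabla^2\phi(x)\,\delta x+o(\|\delta x\|^2).
\]
Since $D_X\ge 0$ and $D_X(x,x)=0$ for a distortion, $\hat x=x$ is a minimiser. Hence $\phi(x)=0$ and $\nabla\phi(x)=0$, and $\nabla^2\phi(x)$ is the PSD matrix $H_D(x)$.

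I will fix the normalisation so that the quadratic term reads $\delta x^\top H_D\,\delta x$, consistent with the footnote's example $H_D=2I$ for squared error, where $\|\delta x\|^2=\tfrac12\delta x^\top(2I)\delta x$. Concretely, I define $H_D$ to absorb the factor $\tfrac12$, or equivalently note that the statement is an approximation up to this constant convention, which is harmless since only $\Sigma$-dependence matters.

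Now substitute $\delta x\approx J_g(z)\eta$. The loss becomes $\eta^\top J_g^\top H_D J_g\,\eta=\eta^\top G_{\mathrm{eff}}\eta$ up to higher order. Taking conditional expectation with $\mathbb{E}[\eta\mid X]=0$ and $\mathbb{E}[\eta\eta^\top\mid X]=\Sigma$ from \eqref{eq:local_noise_model} kills any residual linear term and gives $\mathrm{Tr}(G_{\mathrm{eff}}(x)\Sigma)$.

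\textbf{Observable distortion.} Repeat the argument with $\psi(q):=D_Q(Q(x),q)$, expanded around $q=Q(x)$. Again the value and gradient vanish at the minimiser, and the Hessian is $H_\ell(x)$. Inserting $\delta q\approx J_Q(x)J_g(z)\eta$ from \eqref{eq:lin_observable} gives the quadratic form
\[
\eta^\top J_g^\top J_Q^\top H_\ell J_Q J_g\,\eta=\eta^\top W_{\mathrm{eff}}\eta,
\]
whose expectation is $\mathrm{Tr}(W_{\mathrm{eff}}(x)\Sigma)$.

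\textbf{Main obstacle: error terms.} Three sources of error must be shown to be $o(\|\Sigma\|)$ after taking expectations.
\begin{enumerate}
\item The cubic remainder of each loss.
\item The second-order terms of $g_\theta$ and $Q$ dropped in the linearisations. These contribute $O(\|\eta\|^2)$ to $\delta x$, so they enter the loss only at order $\|\eta\|^3$, \emph{provided} the gradient of the loss at the base point vanishes.
\item Any deterministic bias $g_\theta(z)-x$.
\end{enumerate}
For (1) and (2) I would assume $C^3$ smoothness and finite third moments $\mathbb{E}\|\eta\|^3=o(\|\Sigma\|)$, which holds for the Gaussian approximation. Item (3) is the subtle one: if $g_\theta(z)\neq x$, the loss gradient at $g_\theta(z)$ is nonzero. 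That gradient still produces only a $\Sigma$-independent term plus a term linear in $\eta$, which has zero mean, plus curvature evaluated at $g_\theta(z)$ rather than at $x$. This is why the statement speaks only of the ``$\Sigma$-dependent parts'' and the paper sets the bias aside. I would make this explicit by expanding around $g_\theta(z)$ and noting that the Hessians are then evaluated there, which agrees with $H_D(x)$ and $H_\ell(x)$ to leading order when the bias is small.
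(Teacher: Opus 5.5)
Your proposal is correct and follows the paper's own argument. Expand each loss to second order about its minimiser (so the value and gradient vanish), insert the linearised perturbation $A\eta$ with $A=J_g(z)$ or $A=J_Q(x)J_g(z)$, take expectations via $\mathbb{E}[\eta^\top M\eta]=\mathrm{Tr}(M\Sigma)$, and absorb the resulting factor $\tfrac12$ by convention.

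Two small corrections to your side remarks. First, the factor $\tfrac12$ multiplies the $\Sigma$-dependent term itself, and redefining $H_D$ to absorb it would clash with the footnote's $H_D=2I$; the right justification is the paper's, namely absorbing it into the budgets $\varepsilon_Q,\varepsilon_X$. Second, with nonzero bias the nonvanishing loss gradient at $g_\theta(z)$ also pairs with the second-order terms of the decoder (and of $Q$), giving an extra $\Sigma$-dependent contribution of size $O(\|g_\theta(z)-x\|\,\|\Sigma\|)$ that your item (3) omits, which is one reason the paper simply assumes the bias negligible.
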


Full details of the derivation are given in Appendix~\ref{app:second_order_distortions_proof}.
\if 0
\begin{examplebox}{Example} %(Spectral Sensitivity induces Weber's-law)
Power spectra and band energies are common quantities of interest in turbulence, seismology, and communications. Let $T$ be an orthonormal transform (DFT/DCT) with coefficients $y=Tx$. For disjoint frequency bands $B_k$, define the log-band energies
\[
u_k(x)=\log\!\Big(\sum_{j\in B_k} y_j^2\Big),\qquad k=1,\dots,K.
\]
Consider a quadratic observable loss $D_Q(u,\hat u)=\sum_{k=1}^K \sigma_k^{-2}(u_k-\hat u_k)^2$ (so $H_\ell=\mathrm{diag}(\sigma_k^{-2})$). Then the induced physics sensitivity takes the schematic form
\[
J_Q(x)^\top H_\ell J_Q(x)
\;\sim\;
\sum_{k=1}^K \frac{1}{\sigma_k^2\,e_k(x)^2} \,J_{e_k}^\top J_{e_k}  \,,
\]
where $e_k(x)=\sum_{j\in B_k} y_j^2$ is the $k$-th band energy and $J_{e_k}$ is its linearisation. 

\emph{Interpretation.} The factor $1/e_k(x)^2$ is a gain control: keeping \emph{log}-energies accurate enforces approximately constant \emph{relative} error in each band. High-energy bands therefore demand smaller absolute reconstruction error: an instance of Weber–Fechner's scaling, derived directly from the geometric formulation. 
\end{examplebox}
\fi

\begin{examplebox}{Example} %(Spectral Sensitivity induces Weber's-law)
Power spectra and band energies are common quantities of interest in turbulence,
seismology, and communications. Let $y_j$ be the frequency coefficients of an orthonormal transform (e.g, DFT or DCT). For frequency bands $B_k$, define the
log-band energies
\[
u_k(x)=\log e_k(x),
\qquad
e_k(x)=\sum_{j\in B_k} y_j^2.
\]
Consider a quadratic observable loss
$D_Q(u,\hat u)=\sum_{k=1}^K \sigma_k^{-2}(u_k-\hat u_k)^2$.
Writing $J_{e_k}(x)$ for the Jacobian of the scalar band energy $e_k$, the induced physics sensitivity takes the schematic form
\[
J_Q(x)^\top H_\ell J_Q(x)
\;\sim\;
\sum_{k=1}^K
\frac{1}{\sigma_k^2\,e_k(x)^2}
\,J_{e_k}(x)^\top J_{e_k}(x).
\]
The factor $1/e_k(x)^2$ acts as a gain control:
preserving \emph{log}-energies corresponds to controlling \emph{relative}
band-energy errors. As a result, the observable stiffness weights absolute
energy perturbations more strongly in low-energy bands and more weakly in
high-energy bands.
%The observable stiffness therefore penalises energy errors more strongly in low-energy bands and more weakly in high-energy bands.
\end{examplebox}

\subsection{A local rate--distortion law}
\label{sec:optimal_allocation}

We now introduce our central characterisation of physics-aware compression, using the three quadratic objects in latent space derived previously.

%\begin{theorem}[Local tangent-space rate--distortion law]

%\begin{proposition}[Optimal error covariance]

%\begin{proof}
%See Appendix~\ref{app:opt_sigma_proof}.
%\end{proof}

Fix an operating point \(x\) with \(z=f_\phi(x)\). We interpret \((\varepsilon_Q,\varepsilon_X)\) as local budgets for the second-order approximations of \(D_Q\) and \(D_X\), respectively.

\begin{theorem}[Local tangent-space rate--distortion law]
\label{thm:local_surr_rd}
Under the perturbation model \eqref{eq:local_noise_model} and the local approximations of Propositions~\ref{lem:local_var_rate} and~\ref{prop:second_order_distortions}, the physics-aligned rate--distortion problem admits the local surrogate
\begin{align}
R_{\mathrm{sur}}(\varepsilon_Q,\varepsilon_X; x)
:=\;&
\min_{\Sigma \succ 0}\quad
\Tr\!\left(H_R(z)\Sigma\right)
-\log\det\Sigma
\label{eq:local_surr_obj}\\
\text{s.t.}\quad
& \Tr\!\left(W_{\mathrm{eff}}(x)\Sigma\right) \le \varepsilon_Q,
\label{eq:local_surr_phys}\\
& \Tr\!\left(G_{\mathrm{eff}}(x)\Sigma\right) \le \varepsilon_X.
\label{eq:local_surr_fid}
\end{align}
If the budgets are feasible, the problem has a unique minimiser. Moreover, there exist Lagrange multipliers \(\alpha\ge0\) and \(\gamma\ge0\) such that
\begin{equation}
(\Sigma^\star(x))^{-1}
=
H_R(z)
+
\alpha W_{\mathrm{eff}}(x)
+
\gamma G_{\mathrm{eff}}(x).
\label{eq:sigma_star}
\end{equation}
\end{theorem}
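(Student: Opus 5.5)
The plan is to treat the statement as a smooth convex program over the open cone of positive definite matrices, and to obtain existence, uniqueness, and the form of the optimiser from convex duality.

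\textbf{Convexity.} The map $\Sigma\mapsto\Tr(H_R\Sigma)$ is linear. The map $\Sigma\mapsto-\log\det\Sigma$ is strictly convex on $\{\Sigma\succ0\}$: along any line $\Sigma+tV$ with $V\neq0$ its second derivative is $\Tr(\Sigma^{-1}V\Sigma^{-1}V)>0$. The constraints \eqref{eq:local_surr_phys}--\eqref{eq:local_surr_fid} are linear in $\Sigma$, so the feasible set $\mathcal F$ is convex. Hence the objective is strictly convex on $\mathcal F$, and any minimiser is unique.

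\textbf{Existence.} Take a minimising sequence in $\mathcal F$ and show it cannot escape to the boundary of the cone or to infinity.
\begin{itemize}
\item Near the boundary, $\lambda_{\min}(\Sigma)\to0$. There $-\log\det\Sigma\to+\infty$ provided the other eigenvalues stay bounded.
\item Boundedness of the large eigenvalues is the main obstacle. It holds when $M:=H_R+W_{\mathrm{eff}}+G_{\mathrm{eff}}\succ0$, for example when $H_R\succ0$.
  \begin{itemize}
  \item If $H_R\succ0$, then $\Tr(H_R\Sigma)-\log\det\Sigma\ge\sum_i(c\lambda_i-\log\lambda_i)$ with $c=\lambda_{\min}(H_R)$, which is coercive.
  \item If only $M\succ0$, the constraints give $\Tr(M\Sigma)\le\Tr(H_R\Sigma)+\varepsilon_Q+\varepsilon_X$. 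Then $\Tr(M\Sigma)-\log\det\Sigma$ is coercive, and so is the original objective up to a constant on $\mathcal F$.
  \end{itemize}
\item Without some such nondegeneracy, directions in the joint kernel of $H_R$, $W_{\mathrm{eff}}$, $G_{\mathrm{eff}}$ let $\log\det\Sigma\to\infty$. In that case the infimum is $-\infty$ and no minimiser exists.
\end{itemize}
So I would make this assumption explicit: $\ker H_R\cap\ker W_{\mathrm{eff}}\cap\ker G_{\mathrm{eff}}=\{0\}$ is exactly what "feasible budgets" must be read to include. With it, sublevel sets intersected with $\mathcal F$ are compact subsets of the open cone, and a minimiser exists.

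\textbf{Multiplier form.} If $\varepsilon_Q,\varepsilon_X>0$, then $\Sigma=tI$ is strictly feasible for small $t>0$, so Slater's condition holds and the KKT conditions are necessary and sufficient. Form the Lagrangian
\[
\mathcal L(\Sigma,\alpha,\gamma)=\Tr(H_R\Sigma)-\log\det\Sigma+\alpha\big(\Tr(W_{\mathrm{eff}}\Sigma)-\varepsilon_Q\big)+\gamma\big(\Tr(G_{\mathrm{eff}}\Sigma)-\varepsilon_X\big).
\]
The gradient in $\Sigma$ is $H_R+\alpha W_{\mathrm{eff}}+\gamma G_{\mathrm{eff}}-\Sigma^{-1}$. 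Stationarity at the interior optimum $\Sigma^\star\succ0$ then gives \eqref{eq:sigma_star}, with $\alpha,\gamma\ge0$ and complementary slackness.

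The right-hand side of \eqref{eq:sigma_star} is automatically positive definite, since it equals $(\Sigma^\star)^{-1}$. Conversely, any $\alpha,\gamma\ge0$ making $H_R+\alpha W_{\mathrm{eff}}+\gamma G_{\mathrm{eff}}\succ0$ and satisfying complementary slackness yields the optimum by sufficiency. This confirms the characterisation.
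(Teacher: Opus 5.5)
Your proposal is correct and follows the same route as the paper: strict convexity of $\Tr(H_R\Sigma)-\log\det\Sigma$ on a convex feasible set gives uniqueness, Slater's condition makes the KKT conditions necessary and sufficient, and stationarity of the Lagrangian gives $(\Sigma^\star)^{-1}=H_R+\alpha W_{\mathrm{eff}}+\gamma G_{\mathrm{eff}}$. Your coercivity argument for existence is more careful than the paper's, which infers existence from Slater's condition alone even though Slater yields dual attainment rather than primal attainment, and your condition $\ker H_R\cap\ker W_{\mathrm{eff}}\cap\ker G_{\mathrm{eff}}=\{0\}$ (automatic if $H_R\succ0$) correctly supplies a hypothesis the statement omits, since with only $H_R\succeq0$ the infimum can be $-\infty$.
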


The surrogate problem follows by substituting the local rate expansion from Proposition~\ref{lem:local_var_rate} and the trace distortion surrogates from Proposition~\ref{prop:second_order_distortions} into the constrained rate--distortion objective. The objective is strictly convex in \(\Sigma\), since the trace term is linear and \(-\log\det\Sigma\) is strictly convex on \(\Sigma\succ0\). The inverse-covariance expression follows from Karush--Kuhn--Tucker conditions, with full details in Appendix~\ref{app:opt_sigma_proof}. 

\subsection{Interpretation of physical alignment}

Equation~\eqref{eq:sigma_star} shows that the optimal allocation is additive in precision:
\begin{equation*}
(\Sigma^\star)^{-1}
=
\underbrace{H_R}_{\text{rate curvature}}
+
\alpha\underbrace{W_{\mathrm{eff}}}_{\text{observable sensitivity}}
+
\gamma\underbrace{G_{\mathrm{eff}}}_{\text{signal fidelity}} .
\end{equation*}
This gives a geometric inverse-stiffness interpretation of the covariance. Latent directions with large curvature under \(H_R\) are costly to encode, directions with large curvature under \(W_{\mathrm{eff}}\) produce large observable-space distortion, and directions with large curvature under \(G_{\mathrm{eff}}\) produce large signal-space distortion. The optimal covariance therefore assigns lower variance to directions that are strongly penalised by these metrics.

A useful way to separate information cost from physical and signal sensitivity is to work in coordinates normalised by the rate metric (also called \emph{rate whitening}). Assuming \(H_R\succ0\), define the rate-whitened metrics
\begin{equation}
\widetilde W
:= H_R^{-1/2} W_{\mathrm{eff}} H_R^{-1/2},
\qquad
\widetilde G
:= H_R^{-1/2} G_{\mathrm{eff}} H_R^{-1/2}.
\label{eq:whitened_metrics}
\end{equation}
Then \eqref{eq:sigma_star} can be written as
\begin{equation}
\Sigma^\star
=
H_R^{-1/2}
\Big(I+\alpha\,\widetilde W+\gamma\,\widetilde G\Big)^{-1}
H_R^{-1/2}.
\label{eq:sigma_star_whitened}
\end{equation}
In these coordinates, \(\widetilde W\) and \(\widetilde G\) measure observable and signal sensitivity per unit rate cost. Physical alignment is therefore determined by the eigendirections of these rate-normalised metrics: if the dominant directions of \(\widetilde W\) and \(\widetilde G\) coincide, physical and signal fidelity suppress uncertainty along similar latent directions; if they differ, the two objectives compete for different directions of precision.

%\begin{corollary}[Rate-whitened form]

The same allocation rule also admits a linear-filtering interpretation. It is equivalent to the posterior covariance of a linear-Gaussian estimator with prior precision \(H_R\) and observation precisions \(\alpha W_{\mathrm{eff}}\) and \(\gamma G_{\mathrm{eff}}\). This connects physics-aware compression to the denoising view of autoencoders, where compressed representations are encouraged to retain task-relevant signal structure~\citep{vincent2008extracting}. In our setting, the denoising geometry is explicit: relative to the local rate metric, latent directions with larger observable or signal sensitivity receive smaller variance. Appendix~\ref{app:wiener_interpretation} gives the full derivation of the equivalence.

%\begin{corollary}[Wiener-filter interpretation]

\section{Consequences of Physical Alignment}
\label{sec:consequences}

\subsection{A Physical No-Free-Lunch}

\begin{figure}[t]
    \centering
    \includegraphics[width=\linewidth]{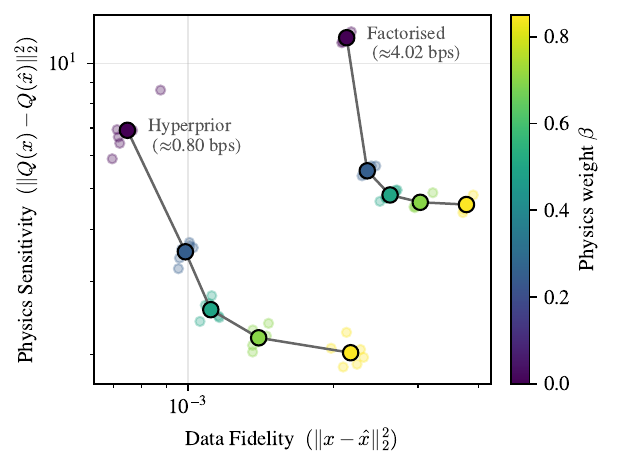}
    \caption{Data and observable space errors for different variational autoencoder models with hyperprior or factorised entropy model. Multiple repetitions are trained for varying physics weight $\beta$, tracing the Pareto frontier. }
    \label{fig:bpp_frontier}
\end{figure}

Figure~\ref{fig:bpp_frontier} illustrates the fixed-rate trade-off that motivates the alignment analysis: changing the physics weight \(\beta\) reallocates error between the primary field and the observable. The local theory identifies a geometric condition under which this trade-off is unavoidable. The relevant comparison is between the rate-normalised physics and fidelity metrics introduced in \eqref{eq:whitened_metrics}. If these metrics favour different latent directions, then reducing observable-space distortion at fixed rate requires allocating precision away from directions preferred by the signal-fidelity objective.

\begin{definition}[Rate-whitened alignment]
\label{def:align}
Assume $H_R\succ 0$ and define $\widetilde W,\widetilde G$ as in \eqref{eq:whitened_metrics}. We say that the task and fidelity are \emph{rate-aligned} if $\widetilde W$ and $\widetilde G$ commute (admit a common eigenbasis). Otherwise they are \emph{not rate-aligned}.
\end{definition}

The following result formalises the fixed-rate consequence of this mismatch.

\begin{theorem}[No free lunch on physics fidelity at fixed rate]
\label{thm:no_free_lunch}
Assume $H_R\succ 0$ and consider the rate-feasible set
\begin{equation}
\mathcal{F}_\rho
:=\left\{\Sigma\succ 0:\ R_{\mathrm{sur}}(\Sigma)\le \rho\right\},
\label{eq:rate_feasible_set}
\end{equation}
for some budget $\rho>0$, where $R_{\mathrm{sur}}$ is defined in \eqref{eq:Rsur_def_simpl}.
Define the optimal covariances at rate $\rho$ as
\begin{align*}
\Sigma_{\mathrm{Phys}}
:=\arg\min_{\Sigma\in\mathcal{F}_\rho}\ \Tr(W_{\mathrm{eff}}\Sigma),
\\
\Sigma_{\mathrm{MSE}}
:=\arg\min_{\Sigma\in\mathcal{F}_\rho}\ \Tr(G_{\mathrm{eff}}\Sigma),
\end{align*}
and assume these minimisers are unique.

If the task and fidelity are \emph{not} rate-aligned (Definition~\ref{def:align}), then improving physics at fixed rate strictly worsens fidelity, in the sense that
\begin{align}
\Tr\!\left(W_{\mathrm{eff}}\Sigma_{\mathrm{Phys}}\right)
&<
\Tr\!\left(W_{\mathrm{eff}}\Sigma_{\mathrm{MSE}}\right)
\nonumber\\
&\Longrightarrow
\Tr\!\left(G_{\mathrm{eff}}\Sigma_{\mathrm{Phys}}\right)
>
\Tr\!\left(G_{\mathrm{eff}}\Sigma_{\mathrm{MSE}}\right).
\label{eq:no_free_lunch_statement}
\end{align}

\end{theorem}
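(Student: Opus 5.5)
The proof rests on uniqueness of the two fixed-rate minimisers. The strict improvement in physics rules out $\Sigma_{\mathrm{Phys}}=\Sigma_{\mathrm{MSE}}$, and uniqueness of $\Sigma_{\mathrm{MSE}}$ then forces a strict loss in fidelity. I would also use the KKT structure from Theorem~\ref{thm:local_surr_rd} to show that non-alignment makes the premise of \eqref{eq:no_free_lunch_statement} automatically true, so the implication is not vacuous.

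\emph{Step 1 (structure of the feasible set).} $R_{\mathrm{sur}}(\Sigma)=\tfrac12\Tr(H_R\Sigma)-\tfrac12\log\det\Sigma$ is strictly convex on $\Sigma\succ0$. Hence $\mathcal F_\rho$ is a convex sublevel set.

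\begin{itemize}
\item $\mathcal F_\rho$ is closed in the open cone, because $R_{\mathrm{sur}}\to\infty$ as $\Sigma$ approaches the boundary ($\log\det\Sigma\to-\infty$).
\item $\mathcal F_\rho$ is bounded, because $H_R\succ0$ makes the trace term dominate for large $\Sigma$.
\end{itemize}

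Since $\mathcal F_\rho$ is compact, the linear objectives $\Tr(W_{\mathrm{eff}}\Sigma)$ and $\Tr(G_{\mathrm{eff}}\Sigma)$ attain their minima, and uniqueness is granted by hypothesis.

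\emph{Step 2 (main implication).} Suppose $\Tr(W_{\mathrm{eff}}\Sigma_{\mathrm{Phys}})<\Tr(W_{\mathrm{eff}}\Sigma_{\mathrm{MSE}})$. Then $\Sigma_{\mathrm{Phys}}\neq\Sigma_{\mathrm{MSE}}$. Since $\Sigma_{\mathrm{Phys}}\in\mathcal F_\rho$ and $\Sigma_{\mathrm{MSE}}$ is the \emph{unique} minimiser of $\Tr(G_{\mathrm{eff}}\Sigma)$ over $\mathcal F_\rho$, any other feasible point has strictly larger $G$-trace. This gives $\Tr(G_{\mathrm{eff}}\Sigma_{\mathrm{Phys}})>\Tr(G_{\mathrm{eff}}\Sigma_{\mathrm{MSE}})$, which is \eqref{eq:no_free_lunch_statement}.

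\emph{Step 3 (role of non-alignment).} This is where I expect the main work. I would apply KKT to $\min \Tr(W_{\mathrm{eff}}\Sigma)$ subject to $R_{\mathrm{sur}}(\Sigma)\le\rho$.

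\begin{itemize}
\item The rate constraint must be active, since otherwise one could scale $\Sigma$ down and decrease the objective unless $W_{\mathrm{eff}}=0$.
\item Stationarity then gives $W_{\mathrm{eff}}+\tfrac{\mu}{2}(H_R-\Sigma^{-1})=0$ with $\mu>0$. Hence $\Sigma_{\mathrm{Phys}}^{-1}=H_R+\lambda W_{\mathrm{eff}}$ with $\lambda=2/\mu>0$.
\item By the same argument, $\Sigma_{\mathrm{MSE}}^{-1}=H_R+\kappa G_{\mathrm{eff}}$ with $\kappa>0$.
\end{itemize}

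Rate-whitening as in \eqref{eq:sigma_star_whitened} gives
\[
\Sigma_{\mathrm{Phys}}=H_R^{-1/2}(I+\lambda\widetilde W)^{-1}H_R^{-1/2},\qquad \Sigma_{\mathrm{MSE}}=H_R^{-1/2}(I+\kappa\widetilde G)^{-1}H_R^{-1/2}.
\]
If these were equal, then $\lambda\widetilde W=\kappa\widetilde G$. Two scalar multiples of each other commute, which contradicts non-alignment (Definition~\ref{def:align}). Hence $\Sigma_{\mathrm{Phys}}\neq\Sigma_{\mathrm{MSE}}$. By uniqueness of $\Sigma_{\mathrm{Phys}}$, the strict physics improvement holds, and by Step 2 the strict fidelity loss follows as well.

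The delicate points in Step 3 are twofold. First, the rate constraint must be shown active and $\lambda,\kappa>0$, which uses $W_{\mathrm{eff}},G_{\mathrm{eff}}\neq0$. Second, Slater's condition must hold; it does, since $\rho>0$ can be exceeded strictly by a suitable scaling of $H_R^{-1}$, so that KKT applies.
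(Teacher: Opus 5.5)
Your proof is correct and uses the same ingredients as the paper's:
\begin{itemize}
\item whitening by $H_R$;
\item the KKT form $\Sigma^{-1}=H_R+\lambda W_{\mathrm{eff}}$ for a linear objective over the rate sublevel set;
\item the fact that $\Sigma_{\mathrm{Phys}}=\Sigma_{\mathrm{MSE}}$ would force $\widetilde W\propto\widetilde G$, hence commutation;
\item uniqueness of the minimisers, which gives the strict inequalities.
\end{itemize}
The difference is in how these are organised. The paper argues non-alignment $\Rightarrow$ distinct minimisers $\Rightarrow$ both strict inequalities, so it proves the premise and the conclusion of \eqref{eq:no_free_lunch_statement} together. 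Your Step 2 instead shows that the implication itself follows from uniqueness of $\Sigma_{\mathrm{MSE}}$ alone, with no appeal to non-alignment, KKT, or a constraint qualification. Non-alignment is then needed only for non-vacuity (Step 3), which is where the real content of the ``no free lunch'' lies. This separation is cleaner and more robust than the paper's. Your insistence on $\lambda,\kappa>0$ also closes a small hole in the paper's lemma, which records only $\lambda\ge0$: with both multipliers zero, $I+\lambda_W\widetilde W=I+\lambda_G\widetilde G$ would not yield proportionality.

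One justification in Step 3 is wrong: $\rho>0$ does not guarantee Slater's condition. $R_{\mathrm{sur}}$ is minimised at $\Sigma=H_R^{-1}$ with value $\tfrac12(m+\log\det H_R)$, which need not lie below $\rho$. If it exceeds $\rho$, $\mathcal F_\rho$ is empty, which the existence hypothesis excludes. If it equals $\rho$, then $\mathcal F_\rho=\{H_R^{-1}\}$, and in that case:
\begin{itemize}
\item both minimisers exist, are unique, and coincide;
\item the constraint gradient $\tfrac12(H_R-\Sigma^{-1})$ vanishes, so no KKT multiplier exists when $W_{\mathrm{eff}}\neq0$;
\item non-alignment does not make the premise true.
\end{itemize}
So Step 3 needs the extra assumption $\rho>\tfrac12(m+\log\det H_R)$, which the paper's proof also uses tacitly. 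The theorem as stated is unaffected: in the degenerate case the implication holds vacuously, and your Step 2 already covers it.
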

%\begin{proof}
%See Appendix~\ref{app:no_free_lunch_proof}.
%\end{proof}

%\noindent\emph{Interpretation.}
Theorem~\ref{thm:no_free_lunch} identifies a geometric source of Pareto tension: if the (rate-whitened) metrics are not rate-aligned, any move along the rate-feasible efficient frontier toward better physics fidelity inevitably degrades MSE. Full details of the proof are given in Appendix~\ref{app:no_free_lunch_proof}.

%The proof proceeds by whitening the rate metric, so that the feasible set is expressed in rate-normalised coordinates, and then applying the KKT conditions to the two linear objectives \(\Tr(\widetilde W\widetilde\Sigma)\) and \(\Tr(\widetilde G\widetilde\Sigma)\). The resulting optimisers allocate covariance along the eigendirections preferred by \(\widetilde W\) or \(\widetilde G\), respectively. If these eigendirections are not shared, the two optima cannot coincide, and the improvement in one objective is accompanied by a strict worsening in the other. Full details are given in Appendix~\ref{app:no_free_lunch_proof}.

The theorem should be read as a sufficient condition for fixed-rate Pareto tension. Non-commutativity of \(\widetilde W\) and \(\widetilde G\) gives a rotational mismatch: the observable and fidelity objectives require precision in different latent eigendirections. This is enough to force a strict trade-off under the theorem assumptions. The converse is not implied. If \(\widetilde W\) and \(\widetilde G\) commute, then the rotational mismatch is absent, but the two objectives may still prefer different allocations through their eigenvalue profiles. Thus, fixed-rate trade-offs can arise either from rotational mismatch, where the relevant eigenspaces differ, or from spectral mismatch, where the eigenspaces are shared but weighted differently.

\subsection{Measuring alignment between physics and fidelity}
\label{sec:align_topk}

The previous result is inspiration for a local alignment score on whether observable fidelity and signal fidelity require precision in the same latent directions. 
%This naturally inspires a local alignment score that compares the dominant eigenspaces of the rate-normalised physics and fidelity metrics. 

\begin{figure}
    \centering
    \includegraphics[width=1.05\linewidth]{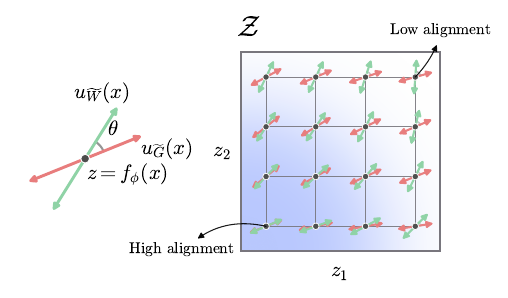}
    \caption{
At each latent point $z=f_\phi(x)$, the red and green arrows denote the dominant fidelity-sensitive and physics-sensitive directions, respectively, in the rate-whitened geometry. Their acute angle $\theta(x)$ determines the local alignment score shown in the background.  }
    \label{fig:align_explained}
\end{figure}

\begin{definition}[Physical Alignment Score]
\label{def:align_k}
Let $\widetilde W(x)$ and $\widetilde G(x)$ denote the rate-whitened physics and fidelity metrics at state $x$.
Let $U_{W,k}(x)\in\mathbb{R}^{m\times k}$ (resp.\ $U_{G,k}(x)$) have orthonormal columns spanning the top-$k$ eigenspace of $\widetilde W(x)$ (resp.\ $\widetilde G(x)$).
We define the $k$-\emph{physical alignment} at $x$ as
\begin{equation}
\mathrm{Align}_k(x)
\;:=\;
\frac{1}{k}\,\big\|U_{W,k}(x)^\top U_{G,k}(x)\big\|_F^2
\;\in\;[0,1].
\label{eq:align_k}
\end{equation}
Equivalently, if $\{\theta_i(x)\}_{i=1}^k$ are the principal angles between the two subspaces, then
$\mathrm{Align}_k(x)=\frac{1}{k}\sum_{i=1}^k \cos^2\!\theta_i(x)$.
A dataset-level score is obtained by averaging:
$\mathrm{Align}_k:=\mathbb{E}_X[\mathrm{Align}_k(X)]$.
\end{definition}

%\begin{remark}[Interpretation and consequence]
%\label{rem:align_k_interp}
$\mathrm{Align}_k(x)$ measures whether the \emph{physics-stiff} directions (top eigenvectors of $\widetilde W$) coincide with the \emph{fidelity-stiff} directions (top eigenvectors of $\widetilde G$) in the rate-whitened geometry (see Fig.~\ref{fig:align_explained}).
Values $\mathrm{Align}_k\approx 1$ indicate that the leading observable-sensitive and fidelity-sensitive subspaces coincide in the rate-whitened geometry, whereas low values $\mathrm{Align}_k < 1$ indicate a \emph{rotational} mismatch between the two objectives, consistent with stronger physics--MSE tension at fixed rate.
%\end{remark}

In practice we never form $\widetilde W$ or $\widetilde G$ explicitly, but access them only through matrix--vector products implemented via automatic differentiation. The dominant $k$-dimensional eigenspaces are then approximated using a standard randomised range-finding procedure: a Gaussian test matrix is propagated through the operator using these matvecs, followed by a small number of subspace-iteration steps to sharpen separation of the leading eigendirections before orthonormalisation. Full implementation details are given in Appendix~\ref{app:align_algo}.

\subsection{Spectral rate advantage from anisotropic physics}
\label{sec:spectral_advantage}

%The closed-form allocation rule \eqref{eq:sigma_star_whitened} reveals when physics-aware compression can \emph{actually} save bits. The key quantity is the spectrum of the (rate-whitened) physics metric: if physical sensitivity concentrates in a few directions, the codec can protect those directions while allowing substantially more uncertainty elsewhere. We quantify the rate advantage for our running example in Fig. \ref{fig:spec_adv}. 

Let \(\tilde w_1\ge \tilde w_2\ge\cdots\ge \tilde w_m\ge 0\) be the eigenvalues of the rate-whitened physics metric \(\widetilde W\). The allocation rule \eqref{eq:sigma_star_whitened} shows that these eigenvalues determine the rate-normalised cost of controlling the observable: directions with large \(\tilde w_i\) require increased precision, while directions with small \(\tilde w_i\) can tolerate larger variance. When the spectrum decays rapidly, observable sensitivity is concentrated in a low-dimensional subspace. Spectral concentration can therefore produce a rate advantage relative to an isotropic fidelity constraint: for the same observable distortion, fewer rate-normalised directions need to be controlled. We illustrate the modal allocation in the commuting, rate-aligned case in Fig.~\ref{fig:spectral_placement}, and report the corresponding rate--distortion behaviour for the running CFD example in Fig.~\ref{fig:spec_adv}.

\begin{figure*}
    \centering
    \includegraphics[width=0.8\linewidth]{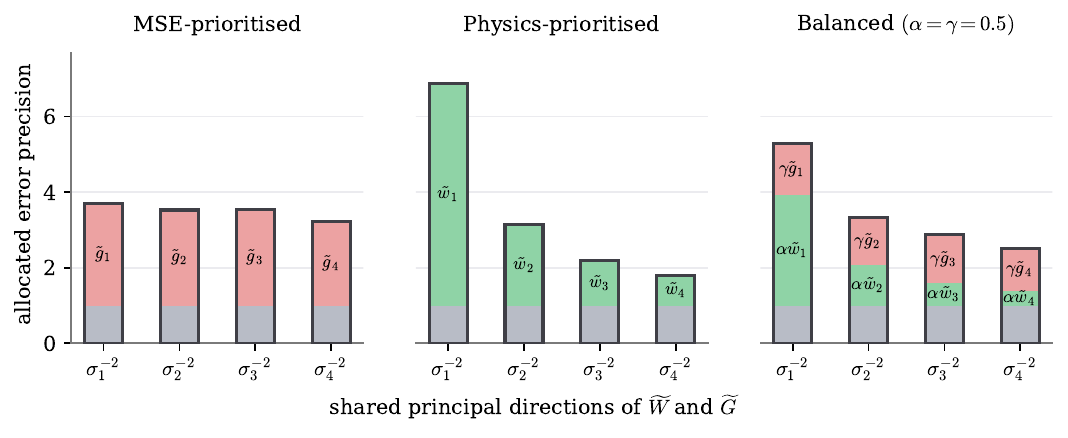}
    \caption{Assuming $\widetilde W$ and $\widetilde G$ share a common eigenbasis, the local allocation rule decomposes by mode, with $(\tilde\sigma_i^\star)^{-2} = 1+\alpha \widetilde w_i+\gamma \widetilde g_i$. Each stacked bar shows the corresponding combined precision: gray is the rate baseline (the 1 constant), green the physics contribution ($\alpha \widetilde w_i$), and red the fidelity contribution ($\gamma \widetilde g_i$). The three panels compare MSE-prioritised, physics-prioritised, and balanced allocations.}
    \label{fig:spectral_placement}
\end{figure*}

\begin{figure*}[t]
    \centering
    \includegraphics[width=0.85\linewidth]{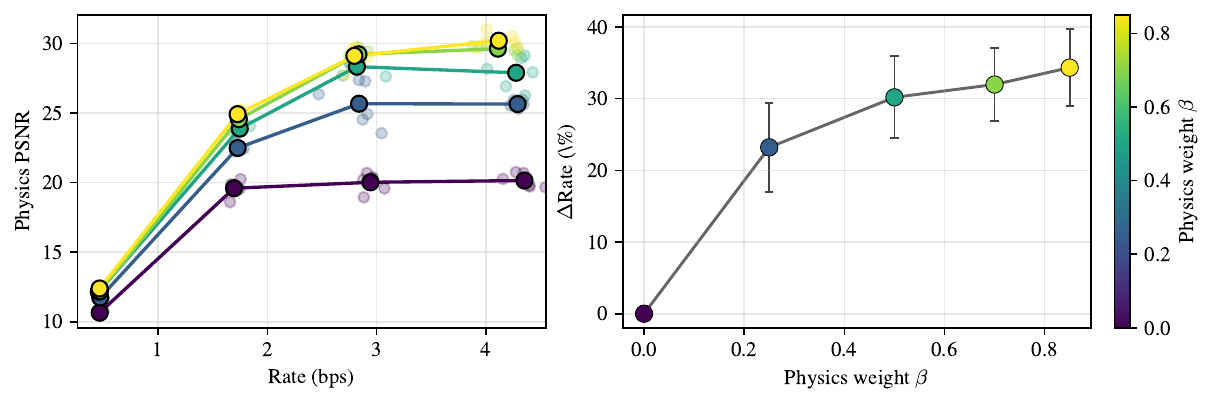}
    \caption{Rate–distortion curves for a hyperprior model for the physics loss (left) and rate gain (right), computed as the  average rate difference over a common PSNR range \citep{bjontegaard2001PSNR}. }
    \label{fig:spec_adv}
\end{figure*}

\section{Experiments}
\label{sec:experiments}

We evaluate the proposed framework with two aims: first, to test whether the local tangent-space model accurately describes the behaviour of trained codecs; and second, to determine whether the proposed alignment diagnostic explains the trade-offs induced by physics-aware compression. The main-text experiments focus on turbulent fluid dynamics, where the geometric predictions can be probed most directly. Additional datasets, observables, and cross-domain analyses, which include cosmological simulation and electron microscopy, are reported in the Appendix.

\paragraph{Data and physical observable.} 
Our primary testbed is the turbulent Navier--Stokes subset of PDEBench, where we compress 2D velocity fields $\mathbf{v}=(v_x,v_y)$. Following Fig.~\ref{fig:vorticity_example}, the physical observable is vorticity,
$Q(\mathbf{v})=\partial_y v_x-\partial_x v_y$,
and the physics loss is an $\ell_2$ discrepancy in observable space,
$D_Q(Q(x),Q(\hat x))=\|Q(x)-Q(\hat x)\|_2^2$.
Signal-space fidelity is measured by MSE (and reported as PSNR when plotting rate--distortion curves). Full dataset descriptions, additional observables, preprocessing, and implementation details are deferred to Appendix~\ref{app:experimental_setup}.

Models are trained by minimising the rate-distortion Lagrangian with a \emph{physics weight} $\beta\in[0,1]$:
\begin{align}
\min_{\phi,\theta,\psi}\;
\mathcal{L}_{\mathrm{train}}
&:= \mathbb{E}_{X\sim p(x)}\Bigg[
\begin{aligned}
& R_{\psi}(\hat Z)
+ \lambda(1-\beta)\,D_X(X,\hat X) \\
& \qquad
+ \lambda\beta\,D_Q\!\big(Q(X),Q(\hat X)\big)
\end{aligned}
\Bigg], \nonumber\\
Z&=f_\phi(X),\qquad
\hat Z=\mathcal{Q}(Z),\qquad
\hat X=g_\theta(\hat Z).
\end{align}
where $R_{\psi}(\hat Z)$ is the variational rate proxy. 
By construction, $\beta=0$ corresponds to MSE-only training and $\beta=1$ to physics-only training.

\paragraph{Validating the sample-local geometric model.} To test the core approximation behind our theory, we probe trained models locally in latent space. For random validation samples $x$, we encode $z=f_\phi(x)$ and inject controlled perturbations $\eta$ with prescribed covariance (diagonal and full-rank variants), forming $\hat z=z+\eta$ and decoding $\hat x=g_\theta(\hat z)$. We then compare (i) the change in variational rate against the quadratic surrogate in Lemma~\ref{lem:local_var_rate}, and (ii) the change in signal/physics losses against the trace surrogates in Proposition~\ref{prop:second_order_distortions}. We quantify agreement via the coefficient of determination ($R^2$) over a sweep of perturbation magnitudes. Fig.~\ref{fig:local_quadratic} shows that the second-order models remain predictive for moderate noise levels, supporting the use of $H_R$, $W_{\mathrm{eff}}$, and $G_{\mathrm{eff}}$ as local geometry descriptors. Appendix~\ref{app:additional_validation} reports additional low-bitrate experiments.

\begin{figure}
    \centering
    \includegraphics[width=\linewidth]{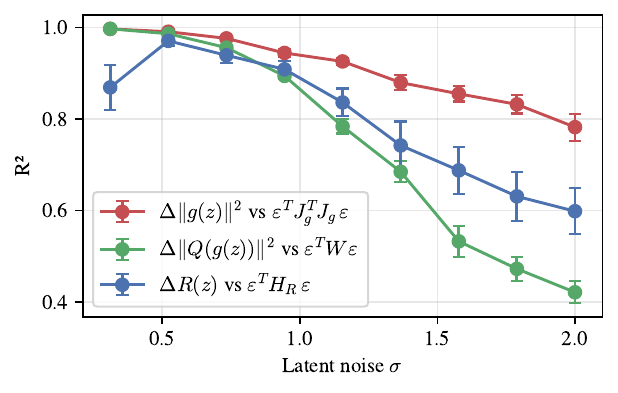}
    \caption{Averaged $R^2$ statistic between the quadratic approximations and their counterparts. Values are averaged over various data samples, models, and hyperparameter configurations. }
    \label{fig:local_quadratic}
\end{figure}

\paragraph{Rate--distortion--physics trade-offs at fixed rate.} Next we examine the empirical Pareto frontier induced by $\beta$. For each target bitrate, we train multiple models with different $\beta$ and evaluate both signal distortion $D_X$ and observable distortion $D_Q$. Fig.~\ref{fig:bpp_frontier} visualises this frontier: increasing $\beta$ improves vorticity fidelity while typically degrading MSE, consistent with Theorem~\ref{thm:no_free_lunch} when the (rate-whitened) physics and fidelity metrics are not aligned. Importantly, this behavior reflects \emph{reallocation} of error into directions that are weakly sensed by $Q$ rather than a uniform improvement. This is also observed in the additional physical observables included in Appendix~\ref{app:additional_validation}. 

\paragraph{Spectral alignment as an explanatory diagnostic.} Finally, we connect the observed trade-offs and rate gains to the rate-whitened geometry (\ref{eq:sigma_star_whitened}). We compute the physical alignment score $\mathrm{Align}_k$ (Definition~\ref{def:align_k}) using only matrix--vector products with the implicit operators $\widetilde W$ and $\widetilde G$, avoiding explicit formation of dense matrices. Fig.~\ref{fig:align_k_sweep} shows that $\mathrm{Align}_k$ increases systematically with $\beta$, indicating that physics-aware training reshapes the representation so that physics-stiff directions increasingly overlap with fidelity-stiff directions in the rate-whitened geometry. This trend helps explain when improved physics fidelity is achieved with mild MSE degradation, and it is consistent with the rate advantages observed in Fig.~\ref{fig:spec_adv} when sensitivity is spectrally concentrated (Section~\ref{sec:spectral_advantage}). Appendix~\ref{app:additional_validation} shows that the diagnostic remains informative across additional domains and observables.

\begin{figure}
    \centering
    \includegraphics[width=\linewidth]{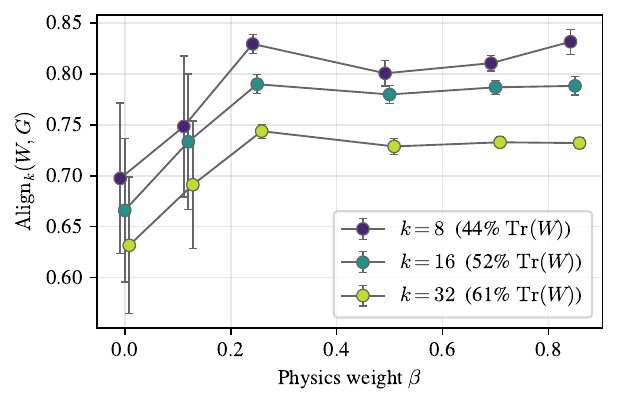}
    \caption{$\mathrm{Align}_k$ metric for $k=8,16,32$, averaged over data samples, on a hyperprior model. For every $k$, the percentage of trace coverage is indicated. }
    \label{fig:align_k_sweep}
\end{figure}

\section{Limitations}

\begin{itemize}
    \item \textbf{Alignment is described, not enforced.} While the theory explains how physical alignment relates to trade-offs and rate efficiency, a full analysis of how training objectives and regularisers help improve the alignment is left for future work. 
    \item \textbf{Reconstruction bias is not modelled explicitly.} The analysis treats compression through local latent perturbations around an operating point, so it captures uncertainty-induced error rather than systematic encoder--decoder bias from finite model capacity or imperfect optimisation.
\end{itemize}

\section{Conclusions}

This paper gives a local geometric explanation of physics-aware learned compression. The central insight is that preserving a physical observable and preserving standard fidelity each privilege certain latent directions, and that the alignment of these directions determines whether physics-aware training yields a tradeoff. Misalignment explains the fixed-rate physics--MSE tension often seen in practice, while spectral concentration explains when preserving physics can remain efficient.

We made this precise through a local tangent-space rate--distortion law, an explicit inverse-stiffness allocation rule for compression noise, and a practical diagnostic based on dominant eigenspace overlap. Experiments across several scientific domains support the theory and show that the diagnostic is predictive in practice.

\begin{takeawaybox}{Take-aways}
\begin{itemize}
    \item \textbf{Physics-aware compression has local geometry.} At each latent operating point, rate, observable sensitivity, and signal fidelity induce a local allocation law for compression noise.
    \item \textbf{Misalignment implies a fixed-rate trade-off.} Under the local model, when physical observables and fidelity metrics are not aligned, improving observable fidelity at fixed rate worsens standard distortion.
    \item \textbf{Alignment can be estimated in practice.} The physical alignment score provides a computable diagnostic for assessing physics--fidelity trade-offs.
    \item \textbf{Anisotropy can yield rate advantages.} Physics-aware compression can be rate-efficient when observable sensitivity concentrates in a low-dimensional subspace.
\end{itemize}
\end{takeawaybox}

\section*{Acknowledgements}

Aleix received the support of a fellowship from the ``la Caixa" Foundation (ID
100010434), under fellowship code LCF/BQ/PFA25/110000. We thank Ryan Cherian for the relevant discussion during the development of the work. 

\section*{Impact Statement}

This paper presents work whose goal is to advance the field of 
Machine Learning. There are many potential societal consequences 
of our work, none which we feel must be specifically highlighted here.

\bibliography{references}
\bibliographystyle{icml2026}

%%%%%%%%%%%%%%%%%%%%%%%%%%%%%%%%%%%%%%%%%%%%%%%%%%%%%%%%%%%%%%%%%%%%%%%%%%%%%%%
%%%%%%%%%%%%%%%%%%%%%%%%%%%%%%%%%%%%%%%%%%%%%%%%%%%%%%%%%%%%%%%%%%%%%%%%%%%%%%%
% APPENDIX
%%%%%%%%%%%%%%%%%%%%%%%%%%%%%%%%%%%%%%%%%%%%%%%%%%%%%%%%%%%%%%%%%%%%%%%%%%%%%%%
%%%%%%%%%%%%%%%%%%%%%%%%%%%%%%%%%%%%%%%%%%%%%%%%%%%%%%%%%%%%%%%%%%%%%%%%%%%%%%%
\newpage
\appendix
\onecolumn

\newpage

%%%%%%%%

\section{Additional details for proofs}
\label{app:entropy_curvature}

\subsection{Proof of variational rate bounds}
\label{app:rate_bounds_proof}

We prove the chain of inequalities
$I(X;\hat X)\le I(X;\hat Z)\le R_{\mathrm{var}}$.

\paragraph{Step 1: Data processing $I(X;\hat X)\le I(X;\hat Z)$.}
By assumption $\hat X=g_\theta(\hat Z)$ for a measurable decoder $g_\theta$, hence
$X\to \hat Z \to \hat X$ is a Markov chain. By the data processing inequality,
\[
I(X;\hat X)\;\le\; I(X;\hat Z).
\]

\paragraph{Step 2: Variational upper bound $I(X;\hat Z)\le R_{\mathrm{var}}$.}
Let $q(\hat z)=\int q(\hat z\mid x)p(x)\,dx$ be the induced marginal under $p(x)q(\hat z\mid x)$.
We expand the variational rate:
\begin{align*}
R_{\mathrm{var}}
&=\mathbb{E}_{p(x)}\left[\mathrm{KL}\!\left(q(\hat Z\mid X)\,\|\,p_\psi(\hat Z)\right)\right]\\
&=\mathbb{E}_{p(x)}\mathbb{E}_{q(\hat z\mid x)}
\big[\log q(\hat z\mid x)-\log p_\psi(\hat z)\big].
\end{align*}
Add and subtract $\log q(\hat z)$ inside the expectation:
\begin{align*}
R_{\mathrm{var}}
&=\mathbb{E}_{p(x)}\mathbb{E}_{q(\hat z\mid x)}
\big[\log q(\hat z\mid x)-\log q(\hat z)\big]
+\mathbb{E}_{p(x)}\mathbb{E}_{q(\hat z\mid x)}
\big[\log q(\hat z)-\log p_\psi(\hat z)\big]\\
&= I_q(X;\hat Z) + \mathrm{KL}\!\left(q(\hat Z)\,\|\,p_\psi(\hat Z)\right)\\
&\ge I_q(X;\hat Z),
\end{align*}
since KL divergence is nonnegative. Identifying $I_q(X;\hat Z)$ with $I(X;\hat Z)$ under the joint
$p(x)q(\hat z\mid x)$ yields $I(X;\hat Z)\le R_{\mathrm{var}}$, completing the proof.

\subsection{Proof of Proposition~\ref{lem:local_var_rate}}
\label{app:local_var_rate_proof}

We evaluate the variational rate at a fixed input $x$ with latent representation $z=z(x)$. The variational posterior is given by the local Gaussian channel $q_\Sigma(\hat z\mid x)=\mathcal{N}(\hat z; z,\Sigma)$. By definition:
\begin{equation}
    R_{\mathrm{var}}(x) = \underbrace{\mathbb{E}_{q_\Sigma}[\log q_\Sigma(\hat z\mid x)]}_{\text{(i)}} - \underbrace{\mathbb{E}_{q_\Sigma}[\log p_\psi(\hat z)]}_{\text{(ii)}}.
\end{equation}
Let $\hat z = z + \eta$, where $\eta \sim \mathcal{N}(0, \Sigma)$. We analyze the two terms separately.

\textbf{(i) Negative Entropy:}
The first term is the negative differential entropy of a multivariate Gaussian $\mathcal{N}(z, \Sigma)$ in $m$ dimensions. Using the standard entropy formula:
\begin{equation}
    \mathbb{E}_{q_\Sigma}[\log q_\Sigma(\hat z\mid x)] = -H(q_\Sigma) = -\frac{1}{2}\log\det(2\pi e \Sigma) = -\frac{1}{2}\log\det\Sigma + C_1,
\end{equation}
where $C_1$ collects terms independent of $\Sigma$.

\textbf{(ii) Expected Prior Log-Likelihood:}
For the second term, we apply the local expansion of the prior log-likelihood assumed in \eqref{eq:local_prior_quad} around the center $z$:
\begin{equation}
    -\log p_\psi(z+\eta) = -\log p_\psi(z) + \nabla [-\log p_\psi(z)]^\top \eta + \frac{1}{2}\eta^\top H_R(z)\eta + o(\|\eta\|^2).
\end{equation}
Taking the expectation with respect to $\eta \sim \mathcal{N}(0, \Sigma)$: The constant term $-\log p_\psi(z)$ remains unchanged; the linear term vanishes because $\mathbb{E}[\eta] = 0$; the quadratic term becomes $\frac{1}{2}\mathbb{E}[\eta^\top H_R(z)\eta] = \frac{1}{2}\mathrm{Tr}(H_R(z)\Sigma)$ using the cyclic property of the trace.

Thus,
\begin{equation}
    -\mathbb{E}_{q_\Sigma}[\log p_\psi(\hat z)] = \mathrm{const.} + \frac{1}{2}\mathrm{Tr}(H_R(z)\Sigma) + o(\|\Sigma\|).
\end{equation}

Combining (i) and (ii) yields the local expansion:
\begin{equation}
    R_{\mathrm{var}}(x) = \mathrm{const.} + \frac{1}{2}\mathrm{Tr}(H_R(z)\Sigma) - \frac{1}{2}\log\det\Sigma + o(\|\Sigma\|).
\end{equation}
Averaging over $p(x)$ preserves this form up to an additive constant, proving the lemma.

%%%%%%%%%%

\subsection{Proof of Proposition~\ref{prop:second_order_distortions}}
\label{app:second_order_distortions_proof}

We analyze the expected distortion induced by the local perturbation $\hat z = z + \eta$, where $\eta$ has zero mean and covariance $\Sigma$.

\paragraph{Linearisation.}
Let $J_g(z)$ be the Jacobian of the decoder $g_\theta$ at $z$. Linearising the decoder around $z$, the deviation in the data space is:
\[
\delta x := \hat x - x \approx g_\theta(z) + J_g(z)\eta - x \approx J_g(z)\eta,
\]
where we assume the deterministic reconstruction error is negligible or absorbed into higher-order terms.

\paragraph{Quadratic Expectation.}
We rely on a standard result for quadratic expansions under zero-mean noise. For any smooth loss $\mathcal{L}(u, \hat u)$ minimised at $\hat u = u$, the second-order expansion with deviation $\delta = A\eta$ satisfies:
\[
\mathbb{E}[\mathcal{L}(u, u+\delta)] 
= \mathrm{const.} + \frac{1}{2}\mathbb{E}\big[\eta^\top A^\top H_\mathcal{L}(u) A \eta\big] +o(\|\eta\|^2)
\approx \mathrm{const.} + \frac{1}{2}\mathrm{Tr}\big(A^\top H_\mathcal{L}(u) A \Sigma\big),
\]
where $H_\mathcal{L}$ is the Hessian with respect to the second argument. Throughout the main text we absorb the universal factor $1/2$ into the
normalisation of the local distortion budgets.

\paragraph{Signal-Space Distortion ($D_X$).}
Using the general result with the mapping $A = J_g(z)$ and Hessian $H_D(x)$, we immediately obtain:
\begin{equation}
    \mathbb{E}[D_X(x,\hat x)] 
    \approx \mathrm{const.} + \frac{1}{2}\mathrm{Tr}\big(J_g(z)^\top H_D(x) J_g(z) \Sigma\big).
\end{equation}
Defining $G_{\mathrm{eff}}(x) = J_g(z)^\top H_D(x) J_g(z)$ recovers the result.

\paragraph{Physics Loss ($D_Q$).}
The observable map linearises as $Q(\hat x) \approx Q(x) + J_Q(x)\delta x$. The effective linear mapping from noise $\eta$ to observable deviation is therefore $A = J_Q(x)J_g(z)$. Applying the general result with Hessian $H_\ell(x)$:
\begin{equation}
    \mathbb{E}[D_Q] 
    \approx \mathrm{const.} + \frac{1}{2}\mathrm{Tr}\big( (J_Q J_g)^\top H_\ell (J_Q J_g) \Sigma \big).
\end{equation}
Defining $W_{\mathrm{eff}}(x) = J_g(z)^\top J_Q(x)^\top H_\ell(x) J_Q(x) J_g(z)$ recovers the result.

% =========================
% Appendix: proof of Theorem~\ref{thm:opt_sigma}
% =========================

\subsection{Proof of Theorem~\ref{thm:local_surr_rd}}
\label{app:opt_sigma_proof}

We prove that the solution to \eqref{eq:local_surr_obj}--\eqref{eq:local_surr_fid} is
$\Sigma^\star=(H_R+\alpha W_{\mathrm{eff}}+\gamma G_{\mathrm{eff}})^{-1}$ for suitable multipliers
$\alpha,\gamma\ge 0$.

\paragraph{Step 1: Convexity and existence/uniqueness.}
The feasible set $\{\Sigma\succ 0:\Tr(W_{\mathrm{eff}}\Sigma)\le \varepsilon_Q,\ \Tr(G_{\mathrm{eff}}\Sigma)\le \varepsilon_X\}$
is convex because the constraints are linear in $\Sigma$. The objective
$f(\Sigma)=\Tr(H_R\Sigma)-\log\det\Sigma$ is strictly convex over $\Sigma\succ 0$ because $\Tr(H_R\Sigma)$ is linear and
$-\log\det\Sigma$ is strictly convex. Hence the problem admits at most one minimiser; under Slater's condition (feasible interior),
a minimiser exists and KKT conditions are necessary and sufficient.

\paragraph{Step 2: Lagrangian and stationarity.}
Form the Lagrangian
\[
\mathcal{L}(\Sigma,\alpha,\gamma)
=
\Tr(H_R\Sigma)-\log\det\Sigma
+\alpha\big(\Tr(W_{\mathrm{eff}}\Sigma)-\varepsilon_Q\big)
+\gamma\big(\Tr(G_{\mathrm{eff}}\Sigma)-\varepsilon_X\big),
\]
with $\alpha,\gamma\ge 0$.
The derivative identities
\[
\nabla_\Sigma \Tr(A\Sigma)=A,
\qquad
\nabla_\Sigma (-\log\det\Sigma)= -\Sigma^{-1}
\]
yield the stationarity condition
\[
0
=
\nabla_\Sigma \mathcal{L}
=
H_R - \Sigma^{-1} + \alpha W_{\mathrm{eff}} + \gamma G_{\mathrm{eff}}.
\]
Rearranging gives
\[
\Sigma^{-1}
=
H_R + \alpha W_{\mathrm{eff}} + \gamma G_{\mathrm{eff}},
\]
and therefore
\[
\Sigma^\star
=
\big(H_R + \alpha W_{\mathrm{eff}} + \gamma G_{\mathrm{eff}}\big)^{-1}.
\]

\paragraph{Step 3: Complementary slackness and feasibility.}
Together with primal feasibility and dual feasibility, the remaining KKT conditions are
\[
\alpha\ge 0,\ \gamma\ge 0,
\qquad
\Tr(W_{\mathrm{eff}}\Sigma^\star)\le \varepsilon_Q,\ 
\Tr(G_{\mathrm{eff}}\Sigma^\star)\le \varepsilon_X,
\]
and complementary slackness:
\[
\alpha\big(\Tr(W_{\mathrm{eff}}\Sigma^\star)-\varepsilon_Q\big)=0,
\qquad
\gamma\big(\Tr(G_{\mathrm{eff}}\Sigma^\star)-\varepsilon_X\big)=0.
\]
Since the problem is strictly convex, any $(\Sigma^\star,\alpha,\gamma)$ satisfying KKT is the unique global optimum.
This completes the proof.

\subsection{Proof of Theorem~\ref{thm:no_free_lunch}}
\label{app:no_free_lunch_proof}

Fix a surrogate rate budget $\rho$. The proof proceeds in three steps: (1) simplified coordinates (whitening), (2) a general optimisation lemma, and (3) establishing the trade-off via contradiction.

\paragraph{Whitening the Prior.}
To simplify the rate constraint, we work in a coordinate system whitened by the prior curvature $H_R$. Assume $H_R \succ 0$ and define the transformed variables:
\begin{equation}
    \widetilde\Sigma := H_R^{1/2}\Sigma H_R^{1/2}, \quad
    \widetilde W := H_R^{-1/2} W_{\mathrm{eff}} H_R^{-1/2}, \quad
    \widetilde G := H_R^{-1/2} G_{\mathrm{eff}} H_R^{-1/2}.
\end{equation}
In these coordinates, the variational rate $R_{\mathrm{sur}}$ simplifies to an isotropic form (up to a constant independent of $\Sigma$):
\[
    \widetilde R(\widetilde\Sigma) = \frac{1}{2}\mathrm{Tr}(\widetilde\Sigma) - \frac{1}{2}\log\det\widetilde\Sigma.
\]
Consequently, the feasible set becomes $\widetilde{\mathcal{F}}_{\tilde\rho} = \{ \widetilde\Sigma \succ 0 : \widetilde R(\widetilde\Sigma) \le \tilde\rho \}$. Furthermore, the trace objectives are invariant under whitening: $\mathrm{Tr}(W_{\mathrm{eff}}\Sigma) = \mathrm{Tr}(\widetilde W \widetilde\Sigma)$. Thus, it suffices to prove the theorem in the whitened domain.

\paragraph{General Optimisation Result.}
Consider minimising a linear objective $\mathrm{Tr}(A \widetilde\Sigma)$ with $A \succeq 0$ over the feasible set $\widetilde{\mathcal{F}}_{\tilde\rho}$.
\begin{lemma}
    The unique minimiser $\widetilde\Sigma^*$ of $\min_{\widetilde\Sigma \in \widetilde{\mathcal{F}}_{\tilde\rho}} \mathrm{Tr}(A\widetilde\Sigma)$ satisfies the form:
    \begin{equation}
        (\widetilde\Sigma^*)^{-1} = I + \lambda A, \quad \text{for some scalar } \lambda \ge 0.
        \label{eq:opt_form}
    \end{equation}
\end{lemma}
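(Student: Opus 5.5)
This is a convex programme: a linear objective over a convex sublevel set. I will prove the lemma with the KKT conditions, in the same way as Theorem~\ref{thm:local_surr_rd}.

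First, the setup. The function $\widetilde R(\widetilde\Sigma)=\tfrac12\Tr(\widetilde\Sigma)-\tfrac12\log\det\widetilde\Sigma$ is strictly convex on $\widetilde\Sigma\succ0$, so $\widetilde{\mathcal F}_{\tilde\rho}$ is convex. It is also closed inside the open cone, because $\widetilde R\to\infty$ both as an eigenvalue tends to $0$ and as an eigenvalue tends to $\infty$. Writing $\widetilde R$ through the eigenvalues $s_i$ of $\widetilde\Sigma$ gives $\sum_i \tfrac12(s_i-\log s_i)$, and each summand is at least $\tfrac12$. So every eigenvalue of a feasible $\widetilde\Sigma$ lies in a fixed compact interval $[a,b]\subset(0,\infty)$, and the feasible set is compact. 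A continuous linear objective therefore attains its minimum. Slater's condition holds whenever $\tilde\rho>\min\widetilde R=m/2$, attained at $\widetilde\Sigma=I$. If $\tilde\rho=m/2$ the set is $\{I\}$ and the claim holds with $\lambda=0$. Uniqueness is taken as given, as in the hypothesis of Theorem~\ref{thm:no_free_lunch}.

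Next, the Lagrangian. Form $\mathcal L=\Tr(A\widetilde\Sigma)+\mu\big(\widetilde R(\widetilde\Sigma)-\tilde\rho\big)$ with $\mu\ge0$. Using the derivative identities from the proof of Theorem~\ref{thm:local_surr_rd}, stationarity gives
\[
A+\tfrac{\mu}{2}\big(I-\widetilde\Sigma^{-1}\big)=0 .
\]
Now split into cases. If $\mu>0$, rearranging gives $\widetilde\Sigma^{-1}=I+\tfrac{2}{\mu}A$, which is the claimed form with $\lambda=2/\mu\ge0$. If $\mu=0$, stationarity forces $A=0$. In that case every feasible point is optimal, and uniqueness forces the feasible set to be a single point, namely $\{I\}$. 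This again has the form \eqref{eq:opt_form} with $\lambda=0$.

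The step I expect to be the main obstacle is justifying that the KKT conditions are necessary here, since the problem is a nonlinear convex constraint over an open cone. Slater's condition handles this: $\widetilde\Sigma=I$ is strictly feasible when $\tilde\rho>m/2$. The constraint is smooth on the interior, and the minimiser lies in the interior of the cone by the eigenvalue bounds above, so no boundary multipliers from $\widetilde\Sigma\succ0$ arise.

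Finally, I will note the consequence used later in the proof of Theorem~\ref{thm:no_free_lunch}. For $A\neq0$ the constraint is active ($\mu>0$). The minimiser is then a spectral function of $A$, so it commutes with $A$ and its eigenbasis is that of $A$. Applying this with $A=\widetilde W$ and $A=\widetilde G$ is what produces the contradiction in the non-aligned case.
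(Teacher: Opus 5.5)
Your proposal is correct and follows the paper's route: the Lagrangian $\Tr(A\widetilde\Sigma)+\mu(\widetilde R(\widetilde\Sigma)-\tilde\rho)$, the stationarity condition $A+\tfrac{\mu}{2}(I-\widetilde\Sigma^{-1})=0$, and $\lambda=2/\mu$. You also supply details the paper omits: existence by compactness, KKT necessity via Slater at $\widetilde\Sigma=I$, and the degenerate $\mu=0$ and $\tilde\rho=m/2$ cases that the paper's division by $\mu$ passes over. The one thing you assume rather than prove is uniqueness, but for $A\neq0$ it follows from strict convexity of the constraint function $\widetilde R$: the midpoint of two distinct minimisers would satisfy $\widetilde R<\tilde\rho$, so it would be an interior point of the feasible set, where the nonzero linear functional $\Tr(A\,\cdot)$ cannot attain its minimum. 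This is the precise content behind the paper's brief appeal to strict convexity.
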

\emph{Proof.} The problem is strictly convex (as $-\log\det$ is strictly convex), ensuring a unique solution. The Lagrangian is $\mathcal{L}(\widetilde\Sigma, \mu) = \mathrm{Tr}(A\widetilde\Sigma) + \mu (\widetilde R(\widetilde\Sigma) - \tilde\rho)$. The stationarity condition $\nabla_{\widetilde\Sigma} \mathcal{L} = 0$ yields:
\[
    A + \frac{\mu}{2}(I - \widetilde\Sigma^{-1}) = 0 \quad \implies \quad \widetilde\Sigma^{-1} = I + \frac{2}{\mu} A.
\]
Setting $\lambda = 2/\mu \ge 0$ completes the lemma. \qed

\paragraph{Distinctness and Strict Inequalities.}
Let $\widetilde\Sigma_{\mathrm{Phys}}$ and $\widetilde\Sigma_{\mathrm{MSE}}$ be the unique minimisers for the physics loss ($\widetilde W$) and MSE ($\widetilde G$) respectively. By the lemma, their inverses are:
\begin{equation}
    \widetilde\Sigma_{\mathrm{Phys}}^{-1} = I + \lambda_W \widetilde W, \qquad
    \widetilde\Sigma_{\mathrm{MSE}}^{-1} = I + \lambda_G \widetilde G.
\end{equation}
We argue by contradiction. Suppose there is no trade-off, i.e., the optimal covariances are identical ($\widetilde\Sigma_{\mathrm{Phys}} = \widetilde\Sigma_{\mathrm{MSE}}$). Then:
\[
    I + \lambda_W \widetilde W = I + \lambda_G \widetilde G \quad \implies \quad \widetilde W \propto \widetilde G.
\]
This implies $\widetilde W$ and $\widetilde G$ are proportional (and thus commute), which violates the assumption that the task and fidelity are \emph{not rate-aligned} (Definition~\ref{def:align}).
Therefore, $\widetilde\Sigma_{\mathrm{Phys}} \neq \widetilde\Sigma_{\mathrm{MSE}}$.

\paragraph{Conclusion.}
Since $\widetilde\Sigma_{\mathrm{Phys}}$ is the \emph{unique} minimiser of the physics loss:
\[
    \mathrm{Tr}(\widetilde W \widetilde\Sigma_{\mathrm{Phys}}) < \mathrm{Tr}(\widetilde W \widetilde\Sigma_{\mathrm{MSE}}).
\]
Mapping back to the original coordinates via the trace invariance established in Step 1 yields:
\[
    \mathrm{Tr}(W_{\mathrm{eff}} \Sigma_{\mathrm{Phys}}) < \mathrm{Tr}(W_{\mathrm{eff}} \Sigma_{\mathrm{MSE}}).
\]
The reverse inequality for the MSE loss follows symmetrically. \qed

% ============================================================
% Appendix: Proof of Theorem~\ref{thm:no_free_lunch}
% ============================================================

\subsection{Physics Alignment as Linear-Gaussian Filtering }
\label{app:wiener_interpretation}

We can reinterpret the optimal error allocation not as a compression problem, but as a classical linear-Gaussian estimation problem. Suppose we wish to estimate the latent perturbation $\eta \in \mathbb{R}^m$ based on three independent sources of information: a prior belief and two noisy "measurements" corresponding to the physics and fidelity constraints.

\paragraph{The Estimation Setup.}
Let the "true" perturbation be $\eta$. We treat the rate constraint as a Gaussian prior centered at zero (reflecting the bit-cost of deviations):
\begin{equation}
    \eta \sim \mathcal{N}(0, H_R^{-1}).
\end{equation}
Next, we view the penalties on physics and signal fidelity as noisy linear observations. We define linearised observation operators $A$ and $B$ via the matrix square roots of the loss Hessians:
\begin{align}
    A &:= H_\ell^{1/2} J_Q(x) J_g(z), \\
    B &:= H_D^{1/2} J_g(z).
\end{align}
We model the constraints as observing "zero error" targets with additive Gaussian white noise:
\begin{align}
    r_Q &= A\eta + \xi_Q, & \xi_Q &\sim \mathcal{N}(0, \alpha^{-1}I), \\
    r_X &= B\eta + \xi_X, & \xi_X &\sim \mathcal{N}(0, \gamma^{-1}I).
\end{align}
Here, the noise variances $\alpha^{-1}$ and $\gamma^{-1}$ represent our tolerance for error in the physics and signal domains, respectively.

\paragraph{The Posterior Precision.}
We seek the posterior distribution $p(\eta \mid r_Q=0, r_X=0)$. By Bayes' rule, the log-posterior is the sum of the log-prior and log-likelihoods. Focusing on the quadratic terms (which determine the covariance):
\begin{align}
    -\log p(\eta \mid \cdot) 
    &\;=\; \underbrace{\frac{1}{2}\eta^\top H_R \eta}_{\text{Prior}} 
    \;+\; \underbrace{\frac{\alpha}{2} \|A\eta\|_2^2}_{\text{Physics Likelihood}} 
    \;+\; \underbrace{\frac{\gamma}{2} \|B\eta\|_2^2}_{\text{Fidelity Likelihood}} 
    \;+\;  (\text{linear in }\eta )  \;+\; \text{const.} \\
    &\;=\; \frac{1}{2}\eta^\top \left( H_R + \alpha A^\top A + \gamma B^\top B \right) \eta \;+\; \dots
\end{align}
Recognising that $A^\top A = W_{\mathrm{eff}}$ and $B^\top B = G_{\mathrm{eff}}$, we see that the posterior precision matrix is exactly the inverted covariance matrix from Theorem~\ref{thm:local_surr_rd}:
\begin{equation}
    \Lambda_{\mathrm{post}} \;=\; H_R + \alpha W_{\mathrm{eff}} + \gamma G_{\mathrm{eff}} \;=\; (\Sigma^\star)^{-1}.
\end{equation}

\paragraph{The Wiener Filter Interpretation.}
This equivalence implies that the optimal physics-aligned codec shapes its quantisation noise $\Sigma^\star$ to match the \emph{posterior uncertainty} of an ideal Bayesian estimator that has observed the physics and fidelity constraints. 
In the eigenbasis of the rate metric (where $H_R=I$), the variance along each mode $i$ shrinks according to the canonical Wiener/Ridge factor:
\begin{equation}
    (\tilde{\sigma}_i^\star)^2 \;=\; \frac{1}{1 + \underbrace{\alpha \tilde w_i}_{\text{Physics SNR}} + \underbrace{\gamma \tilde g_i}_{\text{Fidelity SNR}}}.
\end{equation}
This confirms that the method allocates bits (reduces variance) strictly in proportion to the "Signal-to-Noise Ratio" of the physical and signal constraints relative to the rate cost.

%%%%%%%%%%%%%%%%%%%%%%%%%%%%%%%%%%%%%%%%%%%%%%%%%%%%%%%%%%%%%%%%%%%%%%%%%%%%%%%%%%
%%%%%%%%%%%%%%%%%%%%%%%%%%%%%%%%%%%%%%%%%%%%%%%%%%%%%%%%%%%%%%%%%%%%%%%%%%%%%%%%%%
%% Appendix: Computing Physical alignment score
\section{Computing the physical alignment score}
\label{app:align_algo}

We estimate the alignment score from Section~\ref{sec:align_topk} without explicitly forming the dense rate-whitened operators \(\widetilde G(x)\) and \(\widetilde W(x)\). Algorithm~\ref{alg:align_score} summarises the resulting procedure. Throughout this appendix we fix a latent point \(z=f_\phi(x)\) and suppress the dependence on \(x\).

The operators are accessed only through matrix--vector products. Starting from the pullback metrics of Definition~\ref{def:metrics}, a whitened matvec has the form
\[
v\mapsto \widetilde Gv
=
H_R^{-1/2}G_{\mathrm{eff}}\big(H_R^{-1/2}v\big),
\qquad
v\mapsto \widetilde Wv
=
H_R^{-1/2}W_{\mathrm{eff}}\big(H_R^{-1/2}v\big).
\]
The products with \(G_{\mathrm{eff}}\) and \(W_{\mathrm{eff}}\) are computed using Jacobian--vector and vector--Jacobian products, following standard directional second-order automatic differentiation~\citep{Pearlmutter1994}. For high-dimensional latent spaces, \(H_R\) may be replaced by a diagonal approximation in the whitening factors.

For each operator \(A\in\{\widetilde G,\widetilde W\}\), we estimate its dominant \(k\)-dimensional eigenspace with randomised range finding~\citep{HalkoMartinssonTropp2011}. We draw a Gaussian test matrix \(\Omega\in\mathbb R^{n\times \ell}\), with \(\ell=k+p\), form \(Y=A\Omega\), and apply \(q\) subspace-iteration steps. After orthonormalising the resulting basis \(Q\), we solve the small eigenproblem \(B=Q^\top A Q\) and lift the leading eigenvectors back to the latent space. This gives bases \(U_{G,k}\) and \(U_{W,k}\), from which
\[
\mathrm{Align}_k(x)
=
\frac{1}{k}
\left\|U_{W,k}^\top U_{G,k}\right\|_F^2
\]
is the average squared cosine of the principal angles between the two subspaces~\citep{BjorckGolub1973}. For computing diagnostics (i.e., trace coverage of the top-$k$ eigenspace) we use Hutchinson trace estimates~\citep{Hutchinson1989}.

\begin{algorithm}[t]
\caption{Estimating \(\mathrm{Align}_k(x)\) at \(z=f_\phi(x)\)}
\label{alg:align_score}
\begin{algorithmic}
\REQUIRE Matvecs for \(G_{\mathrm{eff}}\) and \(W_{\mathrm{eff}}\); rate metric \(H_R\); rank \(k\); oversampling \(p\); subspace iterations \(q\)
\STATE Set \(\ell \leftarrow k+p\)
\STATE Build matvecs for \(\widetilde G\) and \(\widetilde W\) using \(H_R^{-1/2}\) 
\FOR{\(A \in \{\widetilde G,\widetilde W\}\)}
    \STATE Draw \(\Omega \in \mathbb{R}^{n\times \ell}\) with i.i.d.\ Gaussian entries
    \STATE \(Y \leftarrow A\Omega\)
    \FOR{\(t=1,\dots,q\)}
        \STATE \(Q \leftarrow \mathrm{orth}(Y)\)
        \STATE \(Y \leftarrow AQ\)
    \ENDFOR
    \STATE \(Q \leftarrow \mathrm{orth}(Y)\)
    \STATE \(B \leftarrow \frac12(Q^\top A Q + Q^\top A^\top Q)\)
    \STATE Compute \(B = V_A \Lambda_A V_A^\top\)
    \STATE \(U_{A,k} \leftarrow Q\,V_A(:,1\!:\!k)\)
\ENDFOR
\STATE \textbf{return} \(\mathrm{Align}_k(x)=k^{-1}\|U_{W,k}^\top U_{G,k}\|_F^2\)
\end{algorithmic}
\end{algorithm}

%\paragraph{Approximate rate-whitening.}
%We compare $\widetilde G(x)$ and $\widetilde W(x)$ rather than $G_{\mathrm{eff}}(x)$ and $W_{\mathrm{eff}}(x)$ in order to remove anisotropy induced purely by local rate sensitivity. In the idealised theory, whitening by the rate metric $H_R(z)$ rescales latent directions so that equal Euclidean norm corresponds to equal local rate cost. In practice, we may not want to form or invert the full dense rate metric $H_R(z)$, since this may be computationally expensive in high-dimensional latent spaces. Instead, a diagonal approximation $R_{\mathrm{diag}}$ may be used, obtained from the Hessian diagonal of a differentiable local rate surrogate. The resulting whitening is therefore approximate, but still captures the leading coordinatewise anisotropy of local rate cost. In this case, whitening is implemented by elementwise scaling with $R_{\mathrm{diag}}^{-1/2}$ inside the matvecs for $G_{\mathrm{eff}}(x)$ and $W_{\mathrm{eff}}(x)$. We found that the approximation does not significantly affect the result; the off-diagonal energy ratio was effectively zero, while diagonal, trace, and quadratic-form errors were on the order of 1e-7 to 1e-8 (quick test). 

\section{Experimental Setup and Additional Details}
\label{app:experimental_setup}

This section summarises the datasets, observables, preprocessing steps, and training details used in our experiments.

\subsection{Domains, compressed fields, and observables}
\label{app:datasets}

We evaluate the proposed framework across three scientific domains chosen to span different spatial structure, observables, and downstream priorities: turbulent fluid dynamics, cosmological simulations, and electron microscopy. For each domain, we specify both the compressed field and the observable used to define the physics-aware objective. Data samples are shown in Fig.~\ref{fig:data_samples}.

\begin{figure}[t]
    \centering
    \includegraphics[width=0.9\linewidth]{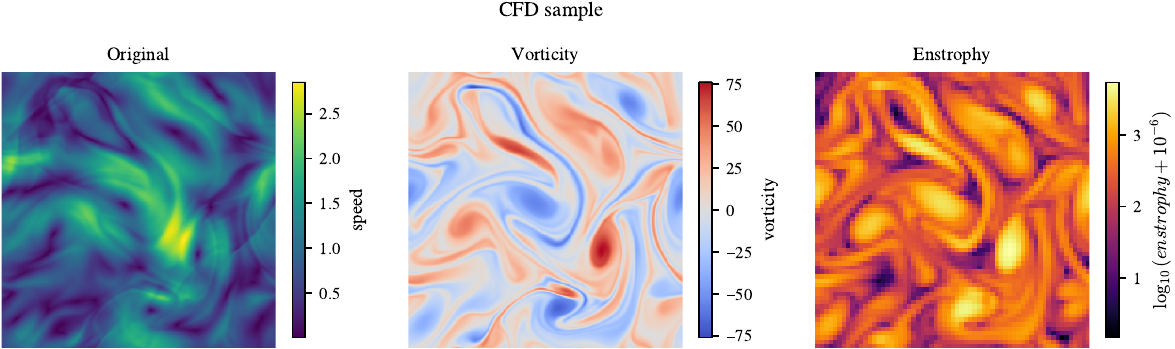}
    \includegraphics[width=0.9\linewidth]{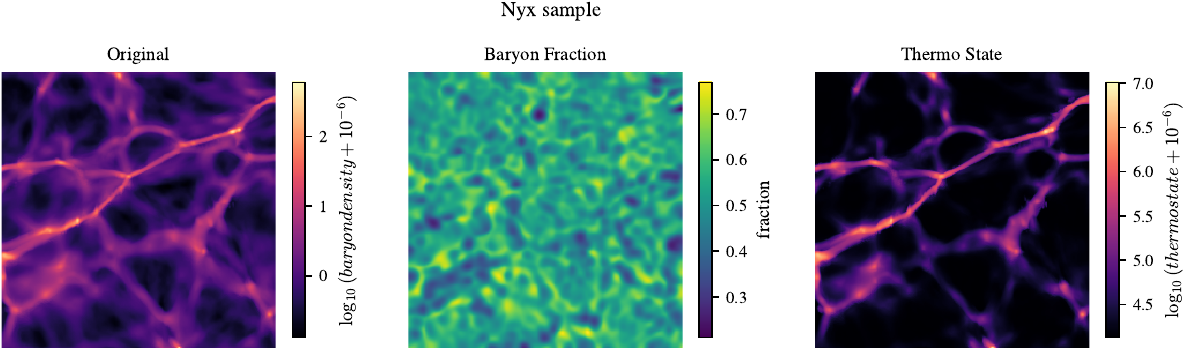}
    \includegraphics[width=0.9\linewidth]{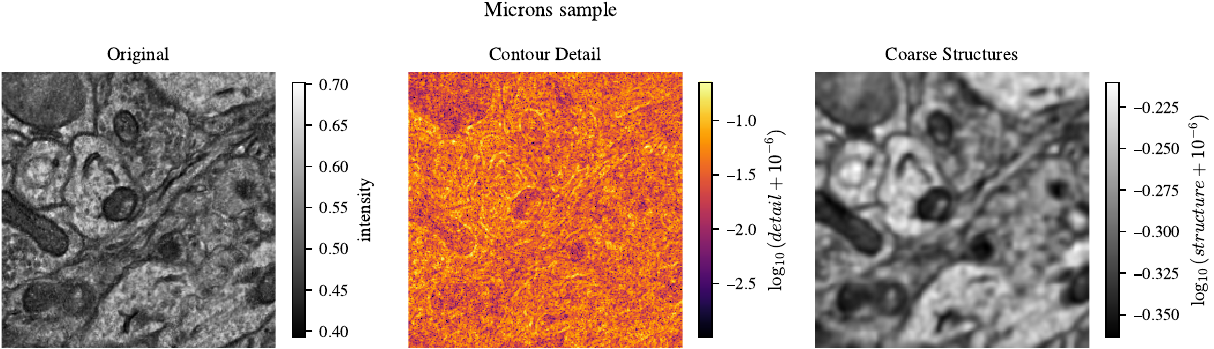}
    \caption{A sample for every data source: original (left) and two observables (middle and right).}
    \label{fig:data_samples}
\end{figure}

\paragraph{Turbulent Fluid Dynamics (PDEBench).}
We use data from \textbf{PDEBench} \citep{takamoto2022pdebench}, focusing on the 2D compressible Navier--Stokes equations in a turbulent regime. The flow is nearly incompressible ($M=0.1$), while the very small shear and bulk viscosities ($\eta,\zeta=10^{-8}$) induce a high-Reynolds-number regime with fine-scale vortical structure. We compress the velocity field $\mathbf{v}=(v_x,v_y)$.

The main observable used in the experiments of Section~\ref{sec:experiments} is the \textbf{vorticity},
\[
\omega(\mathbf{v})=\partial_y v_x-\partial_x v_y,
\]
which is highly sensitive to local compression artefacts because it depends on spatial derivatives. In the additional appendix experiments, we also consider a \textbf{windowed enstrophy} observable,
\[
q_{\mathrm{ens}}(x)=\frac{1}{|W|}\sum_{\mathbf{v}\in W(x)} \omega(\mathbf{v})^2,
\]
which emphasises localised rotational activity while smoothing pixel-scale fluctuations, hence carrying meaning about larger scale energy flows.

\paragraph{Nyx cosmological simulations.}
We use slices from \textbf{Nyx}, a massively parallel cosmological hydrodynamics code designed to simulate baryonic gas and dark matter on large scales \citep{Almgren_2013}. In this domain, we consider observables derived from baryon density, dark-matter density, and temperature.

Our first observable is the \textbf{local baryon fraction},
\[
f_b=\frac{\rho_b}{\rho_b+\rho_{dm}+\varepsilon},
\]
which emphasises local compositional structure. Our second observable is a \textbf{thermodynamic-state proxy}, defined from
\[
p=\rho_b T,
\qquad
s=T\,\rho_b^{-(\gamma-1)},
\qquad
q_{\mathrm{thermo}}=p+s,
\]
which highlights thermodynamic regimes, especially hot and dense regions.

\paragraph{Electron microscopy of mouse visual cortex.}
We also evaluate on grayscale electron microscopy (EM) imagery from the \textbf{MICrONS / minnie65} cerebral cortex volume. This domain provides a qualitatively different setting in which the relevant structure is morphological rather than fluid or thermodynamic.

We consider two observables. The first is a high-frequency \textbf{contour-feature map},
\[
q_{\mathrm{contour}}(I)
=
\sqrt{(w_x\partial_x I)^2+(w_y\partial_y I)^2+\varepsilon}
+\lambda |\Delta I|,
\]
which emphasises membranes, thin boundaries, and fine structural edges. The second is a low-frequency \textbf{coarse-structure field},
\[
q_{\mathrm{coarse}}(I)=G_\sigma * I,
\]
which suppresses edge-level detail and retains larger-scale morphology and contrast structure.

\subsection{Implementation and Training Details}
\label{app:training_details}

To ensure rigorous comparison, we conduct extensive hyperparameter sweeps across architectures, rate constraints, and physics-aware objectives.

\paragraph{Architectures.}
We evaluate two distinct classes of learned image compression architectures:
\begin{enumerate}
    \item \textbf{Factorised Prior:} A standard convolutional autoencoder where the latent distribution is modeled as fully factorised (independent) non-parametric densities.
    \item \textbf{Scale Hyperprior:} A hierarchical model that utilises side information (a hyper-latent) to predict the parameters of the Gaussian distribution of the main latent code, capturing spatial dependencies more effectively \citep{balle2018hyperprior}.
\end{enumerate}

\paragraph{Optimisation.}
We compare models at \emph{fixed target bitrates} rather than fixed Lagrange multipliers. 
We achieve this via dual ascent \citep{boyd2011opt}, introducing a learnable parameter $\lambda_R$ updated iteratively:
\[
\lambda_R^{(t+1)} \leftarrow \max\!\left(0, \lambda_R^{(t)} + \eta_\lambda \big(R(\hat{z}) - R_{\text{target}}\big)\right).
\]

\paragraph{Hyperparameter Sweeps.}
We perform a grid search over the following parameters using multiple random seeds:
\begin{itemize}
    \item \textbf{Physics Weight ($\beta$):} Swept in the range $\beta \in [0, 1]$ to trace the Pareto frontier between signal fidelity (MSE) and physical observability.
    \item \textbf{Target Rates ($R_{\text{target}}$):} Ranging from 0.01 to 4.0 bps with the different architecture and hyperparameter options. This validates most practical use cases (compression factors of up to $\times 1000$).
\end{itemize}

\paragraph{Preprocessing and Hardware.}
To normalise computational capacity and input dimensions:
\begin{itemize}
    \item \textbf{CFD:} The native $512 \times 512$ grids are compressed, while reducing the temporal resolution to obtain less correlated samples.
    \item \textbf{Nyx:} We use NYX volumes of shape $512\times512\times512$ and extract 2D \texttt{xy} slices. Training/evaluation samples are $256\times256$ crops. Inputs are normalised with a \texttt{log1p\_zscore} transform.
    \item \textbf{MICrONS / minnie65:}  We use \texttt{xy} slices from the Minnie65 volume and extract $256\times256$ crops. Inputs are normalised with z-score normalisation.
\end{itemize}
Training leveraged three single-GPU workstations equipped with NVIDIA RTX 3090 (24 GB VRAM) accelerators.

\section{Additional Empirical Validation Across Domains and Observables}
\label{app:additional_validation}

To broaden the empirical scope beyond 2D fluid dynamics, we evaluate the alignment diagnostic on two additional scientific domains: cosmological simulations and electron microscopy imagery. These experiments serve two purposes. First, they test whether the diagnostic remains informative across observables with very different structure. Second, they assess whether the connection between alignment and tradeoff strength persists beyond the CFD setting emphasised in the main text.

\subsection{Alignment across domains and observables}

We compute the alignment diagnostic $\mathrm{Align}_k$ with fixed rank $k=16$ using the rate-whitened metrics, and evaluate it over the same sweep of training settings used in the main experiments. Figure~\ref{fig:align_multiple_sources_appendix} shows that the diagnostic remains informative across all added dataset--observable pairs. As in the main text, alignment generally increases when the physics loss is more strongly weighted, but the attainable level of alignment remains highly observable-dependent. The figure shows that the diagnostic remains informative beyond CFD, but that the attainable level of alignment depends strongly on the observable: some targets become naturally compatible with signal fidelity under physics-aware training, whereas others remain weakly aligned even when the physics weight is increased.

\begin{figure}
    \centering
    \includegraphics[width=\linewidth]{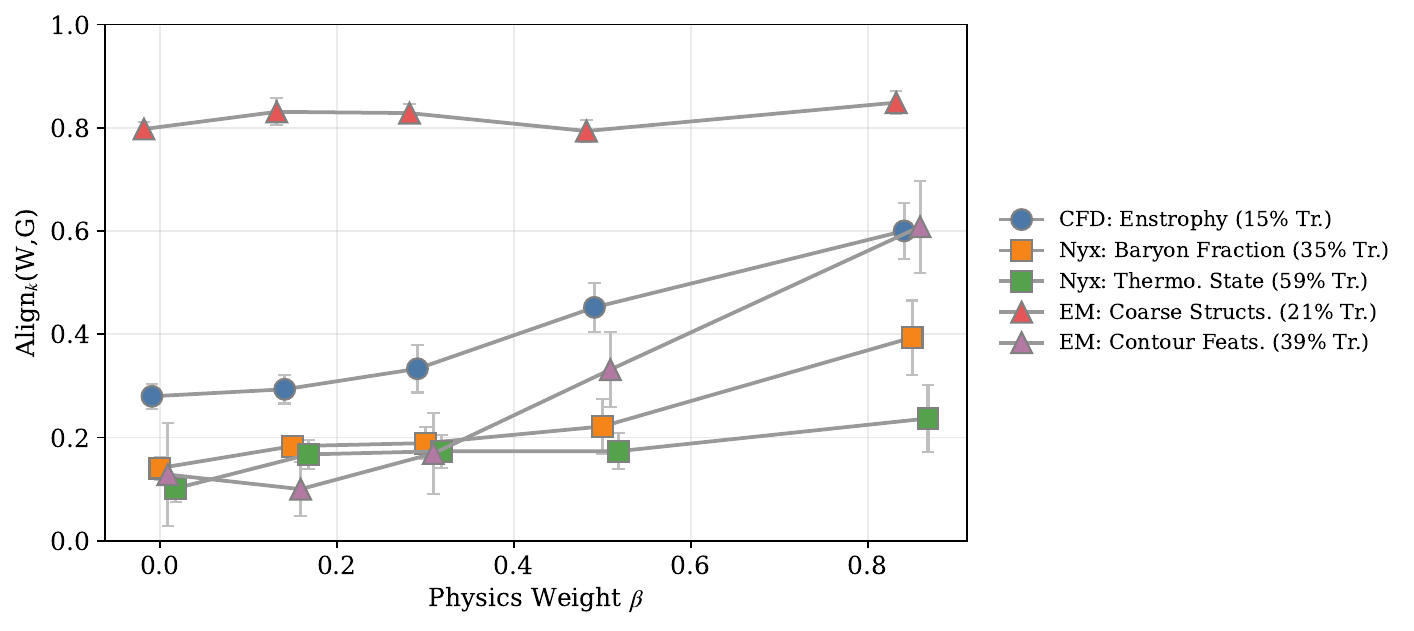}
    \caption{Alignment across additional domains and observables. 
$\mathrm{Align}_{16}(W,G)$ evaluated across the added dataset--observable pairs under the same training sweep as in the main experiments. Percentages in the legend indicate trace coverage for the chosen rank. }
    \label{fig:align_multiple_sources_appendix}
\end{figure}

\subsection{Alignment and tradeoff magnitude}

To illustrate the associated tradeoff behaviour, Figure~\ref{fig:microns_tradeoff_appendix} shows the relationship between alignment and relative tradeoff magnitude for the EM observables. Each point corresponds to an operating point, with the horizontal axis reporting $\mathrm{Align}_k(W,G)$ and marker size indicating rate. The coarse-structure observable occupies a higher-alignment, lower-tension regime, whereas the contour-feature observable lies in a lower-alignment regime with a substantially sharper tradeoff. Across both observables, increased alignment is associated with reduced tension between signal fidelity and observable fidelity.

\begin{figure}
    \centering
    \includegraphics[width=0.6\linewidth]{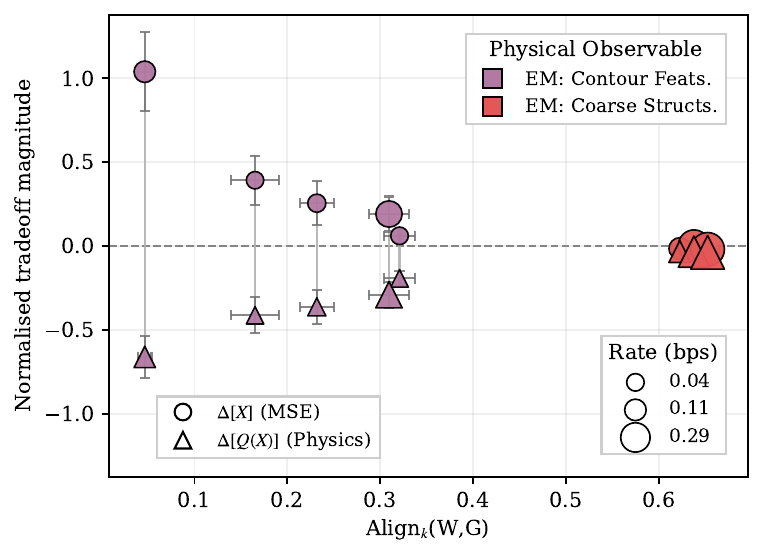}
    \caption{Alignment and tradeoff magnitude for EM observables. 
Each point is an operating point, with horizontal position given by $\mathrm{Align}_k(W,G)$ and marker size indicating rate; circles report relative signal error and triangles report relative observable error.}
    \label{fig:microns_tradeoff_appendix}
\end{figure}

\subsection{Summary across domains}

Table~\ref{tab:complete_results} aggregates the main statistics across the additional domains and observables, including rate, normalised distortion trade-offs, and the alignment diagnostic. Negative values in the physics column indicate improved observable fidelity relative to the $\beta=0$ baseline, while positive values in the MSE column indicate degraded signal fidelity. Taken together, these results support the broader claim that alignment remains predictive across scientific settings with substantially different structure, observables, and operating regimes.

\begin{table}[t]
\centering
\caption{Cross-domain summary of the trade-off and alignment at multiple rates.}

\small
\setlength{\tabcolsep}{6pt}      % horizontal padding
\renewcommand{\arraystretch}{1.2} % vertical padding

\begin{tabular}{>{\raggedright\arraybackslash}p{3.2cm}cccc}
\toprule
\textbf{Physical Observable} &
\textbf{Rate (bps)} &
\makecell{\textbf{$\Delta[X] / |X_0|$} \\ \textbf{(MSE)}} &
\makecell{\textbf{$\Delta[Q(X)] / |Q(X)_0|$} \\ \textbf{(Physics)}} &
\textbf{Align$_{16}(W,G)$} \\
\midrule

\multirow{5}{*}{CFD: Enstrophy}
& 0.96 $\pm$ 0.02 & 0.158 $\pm$ 0.053 & -0.337 $\pm$ 0.145 & 0.275 $\pm$ 0.079 \\
& 0.42 $\pm$ 0.01 & 0.198 $\pm$ 0.053 & -0.364 $\pm$ 0.072 & 0.488 $\pm$ 0.129 \\
& 0.38 $\pm$ 0.04 & 0.146 $\pm$ 0.047 & -0.317 $\pm$ 0.070 & 0.518 $\pm$ 0.096 \\
& 0.28 $\pm$ 0.01 & 0.195 $\pm$ 0.053 & -0.389 $\pm$ 0.081 & 0.545 $\pm$ 0.160 \\
& 0.09 $\pm$ 0.00 & 0.200 $\pm$ 0.064 & -0.403 $\pm$ 0.087 & 0.492 $\pm$ 0.173 \\
\midrule

\multirow{4}{*}{EM: Coarse Structs.}
& 0.64 $\pm$ 0.03 & 0.014 $\pm$ 0.024 & -0.048 $\pm$ 0.031 & 0.625 $\pm$ 0.228 \\
& 0.12 $\pm$ 0.02 & -0.021 $\pm$ 0.023 & -0.039 $\pm$ 0.030 & 0.624 $\pm$ 0.111 \\
& 0.10 $\pm$ 0.00 & -0.026 $\pm$ 0.035 & -0.039 $\pm$ 0.050 & 0.530 $\pm$ 0.107 \\
& 0.04 $\pm$ 0.01 & -0.024 $\pm$ 0.030 & -0.039 $\pm$ 0.029 & 0.790 $\pm$ 0.105 \\
\midrule

\multirow{4}{*}{EM: Contour Feats.}
& 0.03 $\pm$ 0.00 & 0.110 $\pm$ 0.085 & -0.395 $\pm$ 0.218 & 0.138 $\pm$ 0.276 \\
& 0.04 $\pm$ 0.00 & 0.187 $\pm$ 0.086 & -0.385 $\pm$ 0.116 & 0.165 $\pm$ 0.325 \\
& 0.49 $\pm$ 0.00 & 0.303 $\pm$ 0.145 & -0.475 $\pm$ 0.175 & 0.032 $\pm$ 0.044 \\
& 0.11 $\pm$ 0.02 & 0.445 $\pm$ 0.104 & -0.623 $\pm$ 0.137 & 0.048 $\pm$ 0.092 \\
\midrule

\multirow{2}{*}{Nyx: Baryon Fraction}
& 1.05 $\pm$ 0.01 & -0.528 $\pm$ 0.819 & -0.250 $\pm$ 0.049 & 0.029 $\pm$ 0.023 \\
& 0.15 $\pm$ 0.01 & 0.246 $\pm$ 0.406 & -0.924 $\pm$ 0.248 & 0.456 $\pm$ 0.228 \\
\midrule

\multirow{2}{*}{Nyx: Thermo. State}
& 1.06 $\pm$ 0.02 & -0.710 $\pm$ 0.750 & 0.126 $\pm$ 0.057 & 0.089 $\pm$ 0.060 \\
& 0.15 $\pm$ 0.00 & -0.600 $\pm$ 0.665 & -0.069 $\pm$ 0.090 & 0.610 $\pm$ 0.163 \\
\bottomrule
\end{tabular}

\label{tab:complete_results}
\end{table}

%%%%%%%%%%%%%%

%%%%%%%%%%%%%%%%%%%%%%%%%%%%%%%%%%%%%%%%%%%%%%%%%%%%%%%%%%%%%%%%%%%%%%%%%%%%%%%
%%%%%%%%%%%%%%%%%%%%%%%%%%%%%%%%%%%%%%%%%%%%%%%%%%%%%%%%%%%%%%%%%%%%%%%%%%%%%%%

\end{document}